\documentclass[sigconf,nonacm]{aamas}

\usepackage{array}
\usepackage{tabularx}
\usepackage{xspace}

\makeatletter
\AtBeginDocument{%
  \def\city#1{\global\@ACM@citypresenttrue\ignorespaces}%
  \def\country#1{\global\@ACM@countrypresenttrue\ignorespaces}%
}
\makeatother

\AtBeginDocument{%
  \fancypagestyle{standardpagestyle}{%
    \fancyhf{}%
    \fancyfoot[C]{\footnotesize\thepage}%
  }%
  \fancypagestyle{firstpagestyle}{%
    \fancyhf{}%
    \fancyfoot[C]{\footnotesize\thepage}%
  }%
  \pagestyle{standardpagestyle}%
  \hypersetup{%
    pdftitle={GenCoord: Skill-Path Commitments under Private Information},%
    pdfauthor={Peng He, Junning Zhu, Haohan Yuan, Jianpeng Liang}%
  }%
}

\title[GenCoord]{GenCoord: Skill-Path Commitments under Private Information}

\author{Peng He}
\authornote{Equal contribution.}
\affiliation{%
  \institution{Tsinghua University}
  \city{Beijing}
  \country{China}}
\email{hepeng@tsinghua-wx.org}

\author{Junning Zhu}
\authornotemark[1]
\affiliation{%
  \institution{Beijing Normal-Hong Kong Baptist University}
  \city{Zhuhai}
  \country{China}}
\email{t330025113@mail.bnbu.edu.cn}

\author{Haohan Yuan}
\affiliation{%
  \institution{University of North Carolina at Charlotte}
  \city{Charlotte}
  \country{United States}}
\email{hyuan3@charlotte.edu}

\author{Jianpeng Liang}
\affiliation{%
  \institution{University of California San Diego}
  \city{San Diego}
  \country{United States}}
\email{jil652@ucsd.edu}

\renewcommand{\shortauthors}{He et al.}

\begin{abstract}
Suppose one embodied agent knows what must be built, while its teammate alone knows which transformation its workcell can perform. Neither local view determines who should act, what should be handed off, or how the joint task should continue. We introduce \emph{GenCoord}, which turns the task consequence of such private facts into an executable skill-path commitment. A local Qwen3.5-0.8B model emits a multi-step \texttt{SELF} plan and peer \texttt{REQ}; bounded feedback conditions route revision when the deciding capability is peer-local. The resolved commitment is parsed, checked, canonically materialized, compiled to Mineflayer skills, and verified by handoff and terminal state. Counterfactual interventions that hold the world, call schedule, and executor unchanged make requester revision and receiver execution follow the injected task consequence in both directions. Across three independently trained seeds, correct capability feedback closes the paired local-information gap from 50\% to 100\%. Multi-step commitments improve held-out-template success by 6.9 points while reducing model decisions by 32\%. At matched closed-loop quality on 128 held-out semantic clusters, Short DSL reduces peer traffic by 92.8\% and median time-to-commitment by 68.2\% relative to controlled free-form communication. These results identify executable task consequences as the coordination unit connecting distributed local reasoning to verified joint action.
\end{abstract}

\ccsdesc[500]{Computing methodologies~Multi-agent systems}
\ccsdesc[300]{Computing methodologies~Natural language generation}

\keywords{multi-agent coordination, embodied agents, semantic communication, language-model agents, Minecraft}

\newcommand{\method}{\textsc{GenCoord}\xspace}
\newcommand{\shortdsl}{\textsc{Short DSL}\xspace}

\newcolumntype{L}[1]{>{\raggedright\arraybackslash}p{#1}}
\newcolumntype{C}[1]{>{\centering\arraybackslash}p{#1}}
\newcolumntype{Y}{>{\centering\arraybackslash}X}
\newcolumntype{Z}{>{\raggedright\arraybackslash}X}
\newcommand{\tabstyle}{\small\setlength{\tabcolsep}{2.5pt}\renewcommand{\arraystretch}{1.10}}

\newcommand{\figblueprint}[2]{%
  \begingroup
  \setlength{\fboxrule}{0pt}%
  \setlength{\fboxsep}{3.4pt}
  \fbox{\parbox[c][#1][c]{0.95\linewidth}{%
    \centering\includegraphics[width=\linewidth,height=\dimexpr#1-2pt\relax,keepaspectratio]{#2}%
  }}%
  \endgroup
}
\newtheorem{proposition}{Proposition}

\begin{document}
\maketitle
\hypersetup{pdfauthor={Peng He, Junning Zhu, Haohan Yuan, Jianpeng Liang}}

\section{Introduction}

Imagine two Minecraft agents fulfilling an order for a crafting table. Agent $A$ sees the order; agent $B$ alone knows whether its workcell can transform oak planks or can only receive the finished table. In the first case, $A$ should hand off planks and $B$ should craft. In the second, $A$ must craft first and hand off the table. Agent $A$ receives the same local input in both cases, yet the correct actor, handoff item, and continuation all change.

This small example captures a general problem in embodied teams. Goals, tools, workcells, inventories, and execution conditions are distributed across agents. The team can have a well-defined joint route even when no individual view determines it. Coordination must reveal the task consequence of a private fact: who acts, what crosses the handoff boundary, where it goes, and which suffix follows.

Centralized planning can reveal that consequence by assembling local contexts; free-form negotiation can reveal it through repeated state, intent, and plan exchange. GenCoord instead makes the consequence itself the coordination object, preserving the fields that directly determine execution.

The path-determining fact may reside at either endpoint of an edge. A sender-local consequence travels forward inside a peer request, after which the receiver generates its downstream skill path. A peer-local capability consequence travels back through bounded feedback, after which the requester revises the division of work. Both directions require one explicit object whose semantics survive communication, resolution, and execution.

\method turns that object into an executable skill-path commitment. Its primary backend uses a local Qwen3.5-0.8B model~\cite{qwenteam2026qwen35} to compose goals, capabilities, objects, history, received messages, and shared task structure into variable-length role-local paths. The sender jointly emits a multi-step self-plan and peer request:
\begin{center}
\begin{minipage}{0.94\columnwidth}
\small\ttfamily
SELF resource.obtain(q=4,item=oak\_planks)\newline
\hspace*{1.7em}> resource.deliver(q=4,item=oak\_planks,to=agent\_b)\newline
REQ agent\_b craft.item(q=1,input=oak\_planks,\newline
\hspace*{1.7em}item=crafting\_table)\newline
\hspace*{1.7em}> resource.deliver(q=1,item=crafting\_table,\newline
\hspace*{3.4em}dst=order\_chest)
\end{minipage}
\end{center}
\shortdsl is the peer-facing executable interface for this canonical schema. Before resolution it carries a proposed \texttt{SELF+REQ} route; after resolution the same schema carries the role-local obligations that enter execution. In the main forward route, the receiver conditions on \texttt{REQ} and generates its downstream \texttt{SELF} path. In the paired capability diagnostic, a transparent \textsc{Accept}/\textsc{Reject}/\textsc{Counter} response conditions requester revision.

The resolved object follows a deterministic grounded path: parse, schema check, canonical materialization, skill compilation, Mineflayer execution~\cite{prismarinejs2026mineflayer}, verified handoff, and terminal-state readback. Figure~\ref{fig:overview} summarizes the two directions and their shared execution closure.

\begin{figure*}[t]
  \centering
  \figblueprint{3.35in}{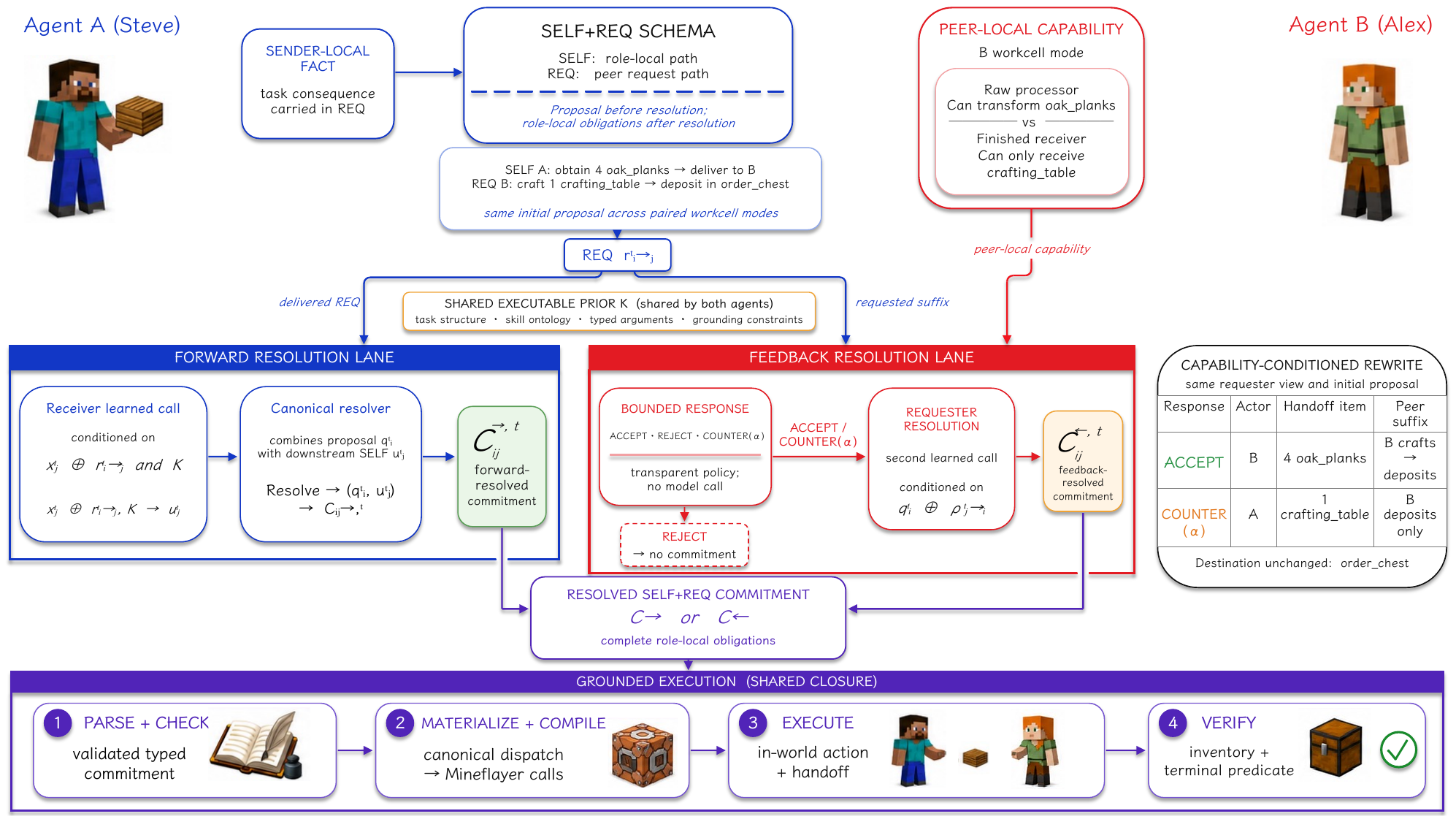}
  \caption{One \texttt{SELF+REQ} schema carries sender-local task consequences forward and peer-local capability consequences back before shared grounded execution.}
  \Description{The overview distinguishes forward resolution, where a sender request conditions the receiver's downstream self path, from feedback resolution, where bounded feedback communicates a peer-local capability consequence and conditions requester revision. Both routes produce a resolved commitment that is parsed, schema-checked, canonically materialized, compiled, executed, and verified.}
  \label{fig:overview}
\end{figure*}

Across three independently trained seeds, bounded capability feedback raises paired local success from 50\% to 100\%, and counterfactual interventions in both directions redirect the resolved route exactly as the injected task consequence specifies. Multi-step commitments improve held-out-template success by 6.9 points while reducing model decisions by 32\%. Across 128 held-out clusters, Short DSL matches JSON and controlled free-form communication in closed-loop quality while reducing peer traffic by 92.8\% and median time-to-commitment by 68.2\% relative to free-form. We make three contributions:
\begin{itemize}
  \item \textbf{Private information.} We characterize paired local ambiguity and show how the executable consequence of a private fact crosses the agent boundary, forward in a request or back through bounded feedback.
  \item \textbf{Executable commitments.} GenCoord represents sender and peer obligations in one multi-step \texttt{SELF+REQ} object; Short DSL preserves that object from proposal and resolution through grounded execution.
  \item \textbf{Mechanism and systems evidence.} Counterfactual interventions establish a content-to-route causal link in both directions; horizon, communication-surface, and backend experiments show how the same grounded object can be extended, encoded, and formed efficiently.
\end{itemize}

\section{Related Work}

\paragraph{Commitments and task allocation.}
Joint-intention theory, SharedPlans, and STEAM model teamwork through shared goals, partial plan knowledge, and communication decisions~\cite{cohen1991teamwork,grosz1996collaborative,tambe1997teamwork}. Contract Net, multi-robot task allocation, and consensus-based auctions coordinate through bids, utilities, contracts, or assignments~\cite{smith1980contractnet,gerkey2004mrta,choi2009cbba}. Azorus combines formal commitments, information protocols, and BDI programming~\cite{chopra2025azorus}. We use commitment operationally: a resolved \texttt{SELF+REQ} object assigns sender and peer task obligations that are discharged by verified handoff and terminal completion.

\paragraph{Embodied and Minecraft collaboration.}
Voyager builds an executable code-skill library for open-ended single-agent learning in Minecraft~\cite{wang2023voyager}. RoCo uses LLM dialogue for multi-robot task and waypoint planning, with feedback from a motion planner~\cite{mandi2024roco}. MindCraft examines situated theory-of-mind dialogue~\cite{bara2021mindcraft}; MindAgent evaluates coordination and scheduling~\cite{gong2024mindagent}; TeamCraft provides multimodal collaborative tasks~\cite{long2024teamcraft}; and MINDcraft introduces the MineCollab benchmark for natural-language, action-by-action collaboration~\cite{white2025collaborating}. VillagerAgent and CausalMACE organize longer execution through graph-structured dependencies and causal planning~\cite{dong2024villageragent,chai2025causalmace}. Gated Coordination decides when local events should escalate to shared coordination~\cite{jian2026gated}, while TickingCollab studies time-sensitive complementary collaboration~\cite{yi2026ticking}. GenCoord studies what crosses the coordination boundary: a multi-step task object that binds local evidence to peer execution and handoff consequence.

\paragraph{Communication content and representation.}
Differentiable communication and bottleneck objectives learn task-specific channels end to end~\cite{foerster2016dial,sukhbaatar2016commnet,wang2020imac}. Alternative formats can improve LLM reasoning and communication~\cite{chen2024beyond}; Optima jointly optimizes communication quality and cost~\cite{chen2024optima}; and OPTiMACS learns task-aware message representations~\cite{gupta2026optimacs}. AgenticCache removes repeated planning from the critical path through reusable transitions~\cite{kim2026agenticcache}. These lines study when agents communicate and how messages are encoded; GenCoord isolates what the peer must receive: an inspectable task consequence whose execution can be intervened on and verified end to end.

\section{Coordination under Private Information}

\subsection{Local Contexts}

Let $\mathcal A=\{a_1,\ldots,a_n\}$ be a team. At coordination round $t$, agent $a_i$ has local context
\begin{equation*}
 x_i^t=(o_i^t,h_i^t,m_i^t),
\end{equation*}
where $o_i^t$ is its observation, $h_i^t$ its execution history, and $m_i^t$ received task messages. Agents share an executable prior $\mathcal K$ containing a skill ontology, task structure, typed arguments, and grounding constraints. Instance goals, capabilities, inventories, and bindings remain local until their task consequences cross a coordination boundary.

\subsection{Paired Ambiguity}

Let $z$ denote a task instance and $\Pi^\star(z;\mathcal K)\subseteq\Pi(\mathcal K)$ the feasible goal-reaching paths allowed by the shared prior. A coordinator must form some $\hat\pi\in\Pi^\star(z;\mathcal K)$ from information distributed across local contexts. The path-determining fact induces an execution-relevant binding $\beta(z)$ over the fields that select actor, handoff object, destination, and continuation. Under the fixed prior $\mathcal K$ and canonical schema used here, a cross-boundary object is coordination-sufficient when it identifies a resolved route in $\Pi^\star(z;\mathcal K)$. GenCoord communicates $\beta(z)$ together with the role-local paths that discharge it.

\begin{proposition}[Paired local ambiguity]\label{prop:ambiguity}
Consider equiprobable instances $z_1,z_2$ and an agent $a_i$ with identical model-visible contexts $x_i(z_1)=x_i(z_2)$. Let $F_k:=\Pi^\star(z_k;\mathcal K)$. If $F_1\cap F_2=\varnothing$, then any $a_i$-local policy whose output distribution is identical under the two views has pairwise expected success at most $1/2$.
\end{proposition}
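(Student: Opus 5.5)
The argument is a direct probability bound. Let $Q$ denote the common output distribution of the $a_i$-local policy over candidate paths $\hat\pi\in\Pi(\mathcal K)$. The hypothesis says the policy's output distribution is identical under the two views, so the same $Q$ governs the coordinator's output whether the true instance is $z_1$ or $z_2$. We formalize success on instance $z_k$ as the event $\hat\pi\in F_k$, consistent with the definition of a coordination-sufficient object as one identifying a resolved route in $\Pi^\star(z;\mathcal K)$. The conditional success probabilities are then $Q(F_1)$ and $Q(F_2)$. Pairwise expected success under equiprobable instances is
\[
\tfrac12 Q(F_1)+\tfrac12 Q(F_2).
\]

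The key step uses disjointness. Since $F_1\cap F_2=\varnothing$, additivity gives $Q(F_1)+Q(F_2)=Q(F_1\cup F_2)\le 1$. Hence the pairwise expected success is at most $1/2$. If the policy is allowed to abstain or to emit a malformed object, that mass lies outside both $F_k$ and only lowers the bound. If internal randomness is present, it is already absorbed into $Q$. The bound is attained by any $Q$ supported on $F_1\cup F_2$ with $Q(F_1)=Q(F_2)=1/2$, or indeed by any deterministic choice in $F_1\cup F_2$. So $1/2$ is tight, although tightness is not required by the statement.

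The main obstacle is not the calculation but pinning down the formal reading. "Any $a_i$-local policy" must mean a policy whose output depends on the instance only through $x_i$, possibly via a longer pipeline of parsing and materialization. I would note that the hypothesis $x_i(z_1)=x_i(z_2)$ already implies the identical-distribution condition for any measurable map of $x_i$ and independent randomness. Deterministic post-processing such as canonical materialization is also a measurable map, so it preserves that equality of output distributions. I would also state explicitly that success is judged by membership of the resolved path in $\Pi^\star(z_k;\mathcal K)$. This makes the bound independent of the executor's downstream behavior, and it is also the form the later $50\%$ paired local baseline instantiates.
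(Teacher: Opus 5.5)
Your argument is correct and is essentially the paper's proof sketch: take the common output distribution, use $F_1\cap F_2=\varnothing$ to get $\mu(F_1)+\mu(F_2)\le 1$, and average over the two equiprobable instances, with deterministic policies as point masses. The only difference is cosmetic. The paper bounds success on $z_k$ above by $\mu(F_k)$, since success requires a path in $F_k$, rather than defining success as exactly that event; your remark about independence from the executor already covers this.
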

\noindent\emph{Proof sketch.} Let $\mu$ be the common output distribution. Since $F_1\cap F_2=\varnothing$, $\mu(F_1)+\mu(F_2)\leq1$, and the equiprobable success is at most $\tfrac12\mu(F_1)+\tfrac12\mu(F_2)\leq\tfrac12$. Deterministic policies are point-mass special cases.

The proposition characterizes whichever endpoint lacks the path-determining fact. In the main forward suites, the receiver cannot recover the active binding from its local view by construction. In the Goal~$\times$~Capability diagnostic, paired requester views are identical while the peer capability changes the feasible actor, handoff object, and suffix.

\subsection{Commitment Object and Resolution}

A grounded task is a tuple $\tau=(a,k,\mathbf v,\ell,P)$ containing an actor $a$, skill $k$, typed arguments $\mathbf v$, grounding location $\ell$, and predecessor set $P$. Let $\mathcal T_{\mathcal K}^{*}$ denote finite sequences supported by $\mathcal K$. A sender proposes
\begin{equation*}
 q_i^t=(s_i^t,r_{i\rightarrow j}^t),\qquad
 s_i^t,r_{i\rightarrow j}^t\in\mathcal T_{\mathcal K}^{*},
\end{equation*}
where $s_i^t$ is its proposed self path and $r_{i\rightarrow j}^t$ the requested peer path. The same canonical schema supports two resolution directions.

\textbf{Forward resolution.} When the sender holds the deciding fact, the receiver observes the delivered request and generates a downstream self path $u_j^t$. The joint object is
\begin{equation*}
 C_{ij}^{\rightarrow,t}=\mathrm{Resolve}_{\rightarrow}(q_i^t,u_j^t).
\end{equation*}
$\mathrm{Resolve}_{\rightarrow}$ returns a commitment only when the receiver identity and normalized \texttt{SELF} suffix realize the request, handoff fields align, and predecessor links admit an acyclic merge.

\textbf{Feedback resolution.} When the peer holds the deciding capability, it returns
$\rho_{j\rightarrow i}^t\in\{\textsc{Accept},\textsc{Reject},\textsc{Counter}(\alpha)\}$,
with $\alpha$ selected from bounded executable alternatives. The requester then produces a revised proposal $q_{i\mid\rho}^t=(s_{i\mid\rho}^t,r_{i\rightarrow j\mid\rho}^t)$, and
\begin{equation*}
 C_{ij}^{\leftarrow,t}=\mathrm{Resolve}_{\leftarrow}
 \bigl(q_i^t,\rho_{j\rightarrow i}^t,q_{i\mid\rho}^t\bigr).
\end{equation*}
\textsc{Accept} requires the revision to preserve the requested branch, \textsc{Counter} requires it to realize the selected alternative, and \textsc{Reject} yields no commitment. Missing or conflicting obligations also yield no resolved object. Resolution aligns role, route, and handoff across the initial and revised proposals; the checker validates typed fields and the materializer instantiates the dispatch plan.

\section{GenCoord}

\subsection{Local Proposal Generation}

A local autoregressive model $G_\theta$ receives $x_i^t$ and $\mathcal K$, generates a string $\hat y_i^t$, and parses it into the initial proposal:
\begin{equation*}
 \hat y_i^t=G_\theta(x_i^t,\mathcal K),\qquad
 q_i^t=\mathrm{Parse}(\hat y_i^t)\in\mathcal Q\cup\{\bot\}.
\end{equation*}
The input explicitly identifies the decision agent, local observation, history, incoming messages, peer, and shared task prior. The model maps local semantics to the task consequence $\beta(z)$, binds active objects and destinations, composes multi-step paths, allocates transformations, and chooses the inter-agent cut represented by \texttt{SELF}+\texttt{REQ}.

\subsection{Generative Task-Path Composition}

The shared prior supplies legal skills and coarse dependencies; the learned backend instantiates the current path. For a bilateral episode, write a feasible joint path as a sender prefix, an inter-agent handoff, and a receiver suffix. GenCoord selects the participating actor for each transformation, grounds objects and destinations, and places the cut between role-local paths. This cut is task dependent: Destination changes where the suffix terminates; Recipe changes the transformation and handoff object; Allocation changes the actor and residual work; Active Branch changes the continuation released after handoff.

The resulting object carries more than an atomic assignment. It records the local steps required before the boundary, the peer obligation activated at the boundary, and the dependencies that connect both. A sender-local fact selects and transmits the suffix directly. A peer-local capability consequence can move the cut, changing which prefix the requester must complete before handoff. Both cases therefore use the same semantic operation---composition of complementary role-local paths around an explicit coordination edge.

\subsection{Resolution across the Agent Boundary}

\textbf{Forward resolution.} This is the main protocol used by the request, horizon, representation, and factor experiments. The sender generates $q_i^t=(s_i^t,r_{i\rightarrow j}^t)$. After receiving $r_{i\rightarrow j}^t$, the receiver makes the second learned call:
\begin{equation*}
 \hat y_j^t=G_\theta(x_j^t\oplus r_{i\rightarrow j}^t,\mathcal K),\qquad
 u_j^t=\mathrm{Parse}(\hat y_j^t),
\end{equation*}
where $u_j^t$ contains the receiver's downstream \texttt{SELF} path. The canonical resolver joins sender and receiver obligations into $C_{ij}^{\rightarrow,t}$ under the role, path, handoff, and dependency conditions above.

\textbf{Feedback resolution.} The Goal~$\times$~Capability diagnostic uses the reverse information direction. A transparent capability policy maps the requested suffix and peer-local capability to \textsc{Accept}, \textsc{Reject}, or \textsc{Counter}$(\alpha)$ without a model call. For accepted or countered proposals, the requester performs a second learned call:
\begin{equation*}
\begin{aligned}
 \hat y_{i\mid\rho}^t
 &=G_\theta(x_i^t\oplus q_i^t\oplus\rho_{j\rightarrow i}^t,\mathcal K),\\
 q_{i\mid\rho}^t&=\mathrm{Parse}(\hat y_{i\mid\rho}^t).
\end{aligned}
\end{equation*}
The resolver then forms $C_{ij}^{\leftarrow,t}$ from the initial proposal, response, and revised proposal under the conditions in Section~3.3. The bounded response communicates the capability consequence for the current request and can move the inter-agent cut by changing the transformation actor, handoff object, and continuation. Because the response is explicit, counterfactual feedback can be injected while the requester view, initial proposal, executable world, and two-call schedule remain fixed. Table~\ref{tab:counterfactual} shows the affected fields.

\begin{table}[t]
\centering
\caption{Capability-conditioned commitment rewrite. The matched rows share the requester view and initial proposal; the counterfactual condition injects the paired response.}
\label{tab:counterfactual}
\tabstyle
\begin{tabularx}{\columnwidth}{@{}
>{\hsize=1.35\hsize\linewidth=\hsize\raggedright\arraybackslash}X
>{\hsize=0.42\hsize\linewidth=\hsize\centering\arraybackslash}X
>{\hsize=0.63\hsize\linewidth=\hsize\centering\arraybackslash}X
>{\hsize=1.00\hsize\linewidth=\hsize\raggedright\arraybackslash}X@{}}
\toprule
\textbf{Peer mode / response} & \textbf{Actor} & \textbf{Handoff item} & \textbf{Peer suffix} \\
\midrule
Raw processor / \textsc{Accept} & $B$ & oak planks & craft table $\rightarrow$ deposit \\
Finished receiver / \textsc{Counter} & $A$ & crafting table & deposit crafting table \\
\bottomrule
\end{tabularx}
\end{table}

\subsection{Short DSL}\label{sec:dsl}

The peer-facing grammar is
\begin{center}
\begin{minipage}{0.94\columnwidth}
\small\ttfamily
SELF <task> [ > <task> ...]\newline
REQ  <agent> <task> [ > <task> ...]
\end{minipage}
\end{center}
where each task is a hierarchical skill path with typed arguments. Short DSL has four properties.

\textbf{Role completeness.} A sender output couples its proposed self path with the requested peer contribution; a receiver output records the downstream obligation it accepts.

\textbf{Semantic sufficiency.} Actor, skill path, object, quantity, destination, binding, and dependency order are explicit because they determine executable behavior.

\textbf{Compositional horizon.} Variable-length paths express acquisition, transformation, handoff, and deposit in one commitment, allowing the model to compose multi-step paths before acting.

\textbf{Deterministic grounding.} For every legal object under the fixed prior and canonical schema, parsing yields a canonical object and the codec preserves task semantics:
\begin{equation*}
 \mathrm{Decode}_{\mathcal K}(\mathrm{Encode}_{\mathcal K}(q))\equiv_{\mathrm{sem}}q,
 \qquad q\in\mathcal Q_{\mathrm{valid}}(\mathcal K).
\end{equation*}
Here $\equiv_{\mathrm{sem}}$ denotes equality of normalized ordered tasks, roles, typed arguments, bindings, destinations, and dependencies. The schema carries a proposal before resolution and role-local obligations afterward, with dynamic arguments attached to the consuming skill.

\textbf{Operational commitment semantics.} A resolved object assigns role-specific obligations with observable discharge: the sender prefix closes at verified handoff, and the peer suffix closes at terminal completion. \textsc{Accept} preserves the division of work; \textsc{Counter} changes its task fields.

\subsection{Commitment Backends}

Direct Short DSL predicts the complete role-local path at both learned stages. Factor-Code + Rule instead predicts one stage-specific binding code at the sender and one at the receiver; a deterministic composer expands those codes through the shared task structure into the same canonical Short DSL object. Both backends use the same model-visible context, two-stage forward schedule, peer-facing interface, and grounded execution stack. When a code identifies one branch of the task structure, composition recovers its actor, path, object, destination, and dependency fields. Exact target-field names appear in the supplement.

\subsection{Grounded Execution}

\begin{figure*}[t]
  \centering
  \figblueprint{2.60in}{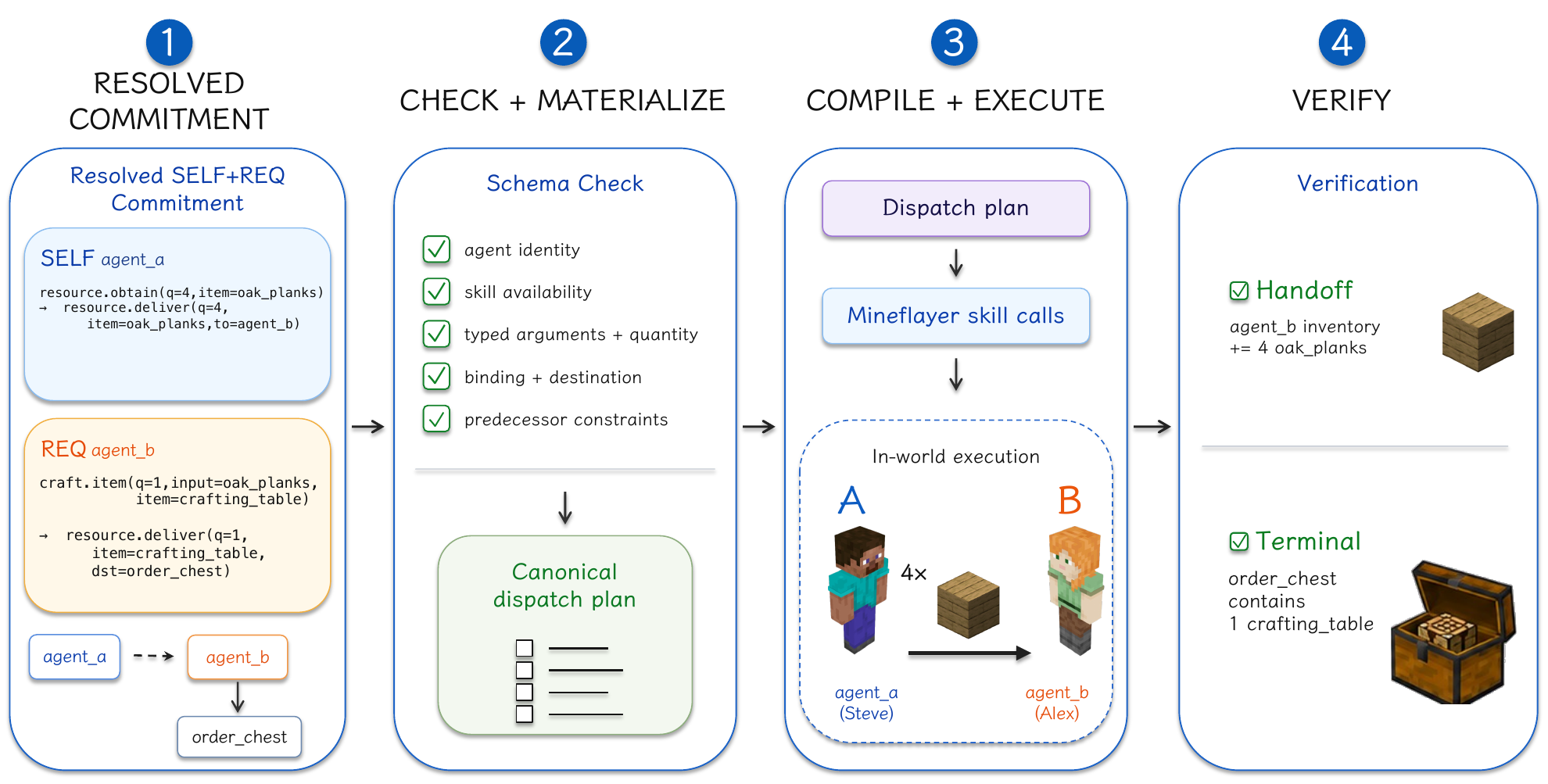}
  \caption{Grounded closure from a schema-validated commitment through canonical materialization to verified execution.}
  \Description{A resolved Short DSL commitment is parsed and checked, canonically materialized into a dispatch plan, compiled to Mineflayer skills, executed as an in-world handoff, and verified through recipient inventory and the terminal predicate.}
  \label{fig:trace}
\end{figure*}

The resolved commitment follows
\begin{equation*}
\begin{aligned}
\text{generate}&\rightarrow\text{parse}\rightarrow\text{schema check}\rightarrow\text{materialize}\\
&\rightarrow\text{compile}\rightarrow\text{execute}\rightarrow\text{verify}.
\end{aligned}
\end{equation*}
The checker validates the resolved commitment against a typed execution schema: agent identity, skill availability, arguments, quantities, bindings, destinations, and predecessor constraints. A passing object is a \emph{validated commitment}. The selected binding and shared task structure then canonically materialize an equivalent dispatch plan, which the compiler lowers to Mineflayer skills while preserving dependency order. In the running example, validation preserves the selected actor and either the oak-planks or crafting-table handoff before materialization. Handoff succeeds only when the recipient inventory reflects the intended item and quantity; the episode succeeds only when the terminal world predicate holds. Figure~\ref{fig:trace} shows this runtime closure.

\section{Experimental Setup}

\subsection{Environment and Tasks}

We evaluate on the official Minecraft Java 1.21.4 server with Mineflayer 4.37.1. Each episode contains two agents, local observations, a shared skill ontology and task prior, and a mandatory in-world handoff. The environment is static and communication reliable, isolating how distributed task facts enter a joint plan. Four families vary distinct commitment fields: Destination changes the target location; Recipe changes the transformation and handoff item; Allocation changes the actor and remaining work; Active Branch changes the downstream continuation. Each family contains two templates. Figure~\ref{fig:tasks} summarizes the coverage; complete schemas and bindings appear in the supplement.

\begin{figure*}[t]
  \centering
  \figblueprint{2.25in}{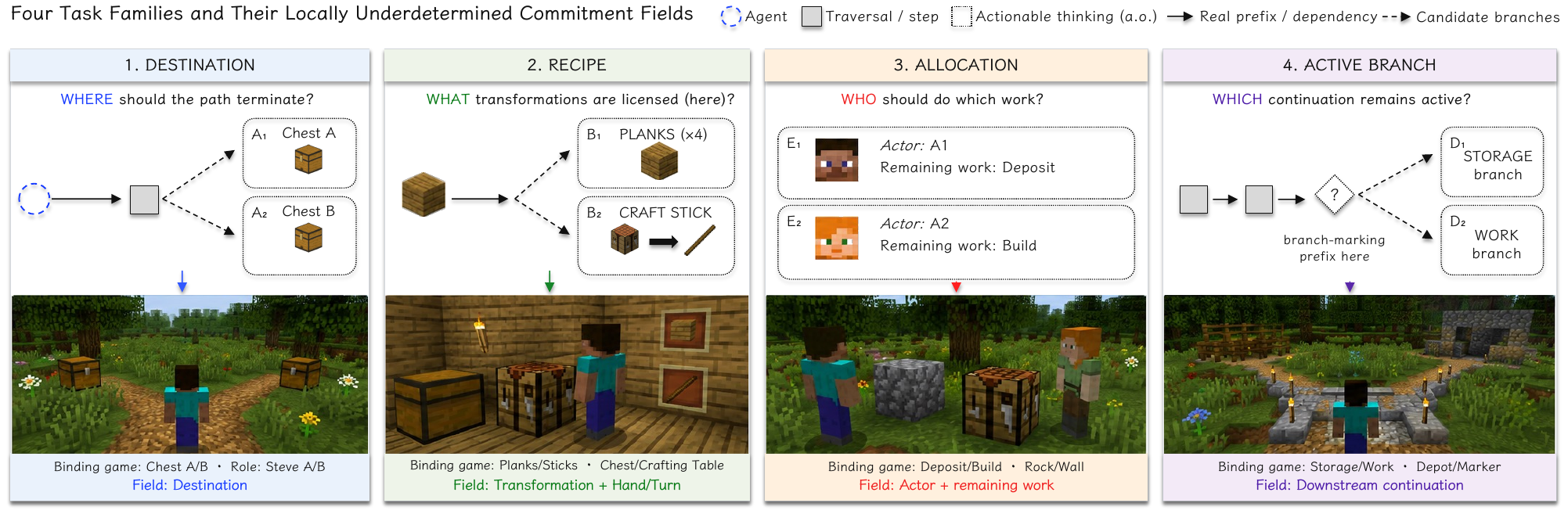}
  \caption{Four Minecraft families vary the task consequence carried across the agent boundary.}
  \Description{Annotated panels show Destination, Recipe, Allocation, and Active Branch tasks. Each panel marks the local fact and the resulting change in destination, transformation or handoff object, actor, or continuation.}
  \label{fig:tasks}
\end{figure*}

\subsection{Data and Training}

Targets are generated programmatically from task templates, sampled local facts, and canonical task specifications. Public Minecraft resources inform the ontology and scenario design. We use no language-model teacher or manually annotated reciprocal requests. Forward records contain two learned stages: a sender-local context paired with its \texttt{SELF}+\texttt{REQ} target, and a receiver context augmented with the delivered \texttt{REQ} paired with its downstream \texttt{SELF} target. The Goal~$\times$~Capability diagnostic uses a separate requester-initial, transparent-response, and requester-revision dataset.

Task templates yield 480 semantic training clusters under two role permutations and 1,920 forward rows: 960 sender-proposal rows and 960 receiver-after-request rows. Training uses two epochs and 240 optimizer steps; full optimization settings appear in the supplement. Input tokens are masked from the causal-LM loss. The single-step control adds 960 intermediate sender rows, for 2,880 rows and 360 optimizer steps. Controlled free-form receives 2,400 rows and 300 optimizer steps under the same two-epoch schedule. Main learned conditions use three independent seeds.

The submission-scale representation pool contains 128 held-out semantic clusters---16 per template---with both role permutations and three seeds, yielding 768 views per representation and 2,304 closed-loop episodes across the three communication surfaces. Mechanism evaluations use 160 clusters for request interventions, 80 held-out-template clusters for commitment horizon, and 60 unseen binding combinations. The Goal~$\times$~Capability suite contains 40 semantic clusters---two goals, two workcell modes, and ten world variants---organized as 20 matched capability-counterfactual pairs. A-only planning, correct feedback, counterfactual feedback, and centralized full information each evaluate all 40 clusters under two role permutations and three independently trained seeds: 240 episodes per condition and 960 total. The counterfactual condition injects the bounded response from the paired workcell mode while preserving the requester input, initial proposal, and executable world. Semantic cluster remains the independent unit, with seed and role as repeated observations; the supplement additionally reports a 20-pair block-bootstrap sensitivity. A separate 128-cluster pool supports the matched backend comparison.

The pools separate distinct transfers: new bindings within known templates, held-out templates, unseen factor cross-products, and unseen transform-order motifs. Each retains its own independent cluster count.

\subsection{Comparators}

The representation comparison matches the Qwen3.5-0.8B base model, semantic clusters, canonical task fields, two-epoch budget, skill interface, checker, materializer, executor, and verifier. GenCoord--DSL emits Section~\ref{sec:dsl}'s grammar; GenCoord--JSON serializes the same fields; controlled multi-turn free-form expresses the same bilateral task semantics in natural language until a shared extractor recovers the commitment. It is a communication-surface comparator under the same model, task, and executor; MINDcraft and MineCollab remain system-level related work.

Mechanism conditions remove \texttt{REQ}, replace it with an executable same-template alternative, inject counterfactual feedback from the paired capability mode, or shorten the commitment horizon. \textsc{Rule-Minimal-Request} forwards the sender's private binding with zero model calls and uses the public composer to materialize both agents' paths. The Explicit-Factor Composer receives canonical factors directly. Factor-Code + Rule uses the same 1,920 rows, three seeds, two epochs, 240 steps, and forward stages as Direct Short DSL. In this structured suite, model-visible private-fact labels align one-to-one with the shared binding codebook; generated target length is the measured representation difference.

Each comparison changes one scientific layer. Request intervention changes delivered task content while preserving schedule and surface validity. Horizon changes whether the joint route is committed before execution or regenerated at an intermediate state. Short DSL versus JSON is a serialization comparison over the same canonical fields, two learned stages, and execution stack. Controlled free-form is an end-to-end communication-surface comparison under the same base model, task, and executor; it also changes turn structure, context growth, commitment extraction, calls, and training budget. Backend comparison changes the learned target while preserving the peer-facing Short DSL message. Centralized full information changes where local contexts are assembled.

\subsection{Metrics and Statistics}

Terminal success requires the world-state goal predicate after the complete episode. Raw validity records parsing. Commitment validity checks actors, skills, arguments, bindings, and dependencies against the canonical task schema and, when present, the delivered request or feedback. In feedback interventions this validity is response-conditioned; terminal success separately tests consistency with the unchanged true-world capability. Verified handoff requires the intended inventory transfer. Online metrics are model calls, input/output tokens, peer-directed wire bytes, and time-to-commitment (TTC). For structured methods, wire bytes count the inter-agent \texttt{REQ} payload; for controlled free-form, they sum all agent-to-agent dialogue before commitment extraction. Complete episode time ends at terminal readback and is reported descriptively.

Calls, tokens, wire bytes, and TTC are measured on a 40-cluster same-card sample with hot batch-1 models on an NVIDIA RTX 4090D, totaling 720 episode runs and 1,560 model calls. Semantic cluster is the independent unit; seed and role views are repeated observations. Paired comparisons use cluster-bootstrap 95\% intervals, and all-success pools receive exact binomial lower bounds.

\section{Results}

Private information creates a 50\% ambiguity ceiling; task content selects the resolved path in both directions of the edge; and a longer commitment horizon removes online replanning. Under this mechanism chain, Table~\ref{tab:main} reports the quality-matched headline result: all three communication surfaces complete 128/128 held-out clusters, while Short DSL has the lowest online coordination cost.

\begin{figure*}[t]
  \centering
  \figblueprint{1.70in}{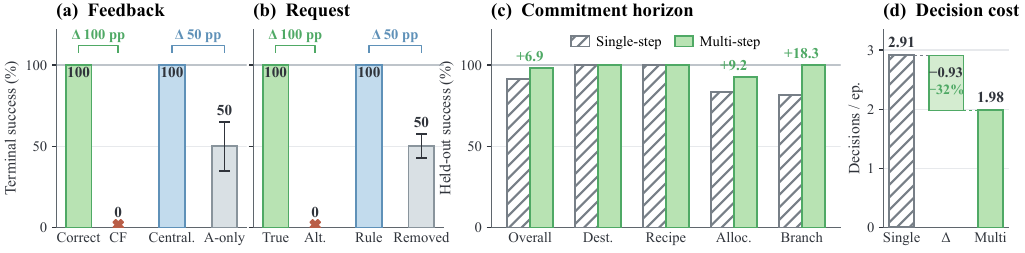}
  \caption{Mechanism evidence: feedback content selects requester revision, request content selects peer execution, and longer commitments reduce online decisions.}
  \Description{Panel a compares A-only, correct feedback, counterfactual feedback, and centralized full information on 40 semantic clusters arranged as 20 matched counterfactual pairs across three training seeds. Panel b compares true, removed, and same-template alternative requests with a deterministic correct-binding rule. Panel c compares multi-step and single-step commitments in success and model decisions.}
  \label{fig:mechanisms}
\end{figure*}

\subsection{Feedback Content Selects Requester Revision}

The Goal~$\times$~Capability suite contains 40 semantic clusters arranged as 20 matched capability-counterfactual pairs. Within each pair, the requester's model-visible input and initial proposal are byte-identical while the peer's private workcell mode changes the feasible actor, handoff object, and suffix. Four conditions evaluate all 40 clusters under two role permutations and three independently trained seeds, yielding 240 episodes per condition and 960 total: A-only planning, correct bounded feedback, counterfactual feedback, and centralized full information.

Every seed reproduces the 50/100/0/100 profile with descriptive seed-level SD 0.0. All 960 runs are parse-valid, planning-complete, and fallback-free. Correct feedback and centralized full information each succeed on 240/240. A-only succeeds on 120/240 and chooses the wrong craft actor on the remaining 120. In the feedback conditions, every resolved commitment is valid relative to the injected response: correct feedback also matches the true workcell, whereas counterfactual feedback is inconsistent with it and fails all 240 episodes with the wrong actor. Cluster-bootstrap effects are $+50.0$ points over A-only (95\% CI $[35,65]$), $+100.0$ over counterfactual feedback (95\% CI $[100,100]$), and $0.0$ versus centralized full information (95\% CI $[0,0]$).

The intervention isolates feedback semantics. With the requester view, initial proposal, executable world, and two-call schedule unchanged, final commitments follow counterfactual feedback in 240/240 episodes: craft actor, handoff item and count, and peer suffix match the paired workcell mode, while goal fields remain fixed. The bounded capability consequence therefore causally controls the inter-agent cut in this diagnostic. Centralized planning uses one call and aggregates a mean 956 B of state; feedback resolution uses two local calls and a mean 103 B response (p95 118 B). Figure~\ref{fig:mechanisms}(a) summarizes the causal and full-information references.

\subsection{Delivered Binding Determines Peer Execution}

On 160 semantic clusters, the true multi-step request reaches 100\% terminal success. Removing \texttt{REQ} while retaining the sender self-plan yields 50\%; each template contains two balanced admissible bindings, making this a construction-level ambiguity ceiling. Replacing the request with the executable alternative from the same template---the request that would be correct under the other private fact---preserves message presence, protocol schedule, schema, approximate length, and call count, yet success falls to 0\%. The paired gains are $+50.0$ points over request removal (95\% CI $[42.5,57.5]$) and $+100.0$ points over the same-template alternative (95\% CI $[100,100]$).

A sender-local deterministic rule also reaches 100\% with zero model calls by forwarding the correct private binding; the public composer materializes the two role-local paths. Under template shift, the rule remains at 100\% while GenCoord reaches 98.1\%, a paired difference of $-1.9$ points with 95\% CI $[-4.2,-0.2]$. Receiver execution follows the delivered binding one-to-one whenever an executable plan is formed: same-template alternatives redirect all 960 confirmatory seed-role episodes and all 471 executable held-out episodes to the alternative path. GenCoord's nine held-out failures all occur before executable-plan formation, with zero wrong-binding failures. Within this intervention, the delivered binding is the causal control variable for peer execution; the backend and protocol surface determine whether a complete executable commitment is formed. Figure~\ref{fig:mechanisms}(b) summarizes the four conditions.

\subsection{Multi-Step Commitments Reduce Online Decisions}

Under their respective closed-loop training protocols, multi-step commitments reach 98.1\% held-out template success, compared with 91.3\% for the separately trained single-step controller; the paired difference is 6.9 points with a 95\% interval of $[2.9,10.8]$. Multi-step uses 1.98 model decisions per episode versus 2.91, a 32\% reduction; the paired difference (multi-step minus single-step) is $-0.931$ with 95\% CI $[-0.971,-0.892]$. The advantage persists even though the single-step model receives 2,880 supervision rows and 360 optimizer steps, compared with 1,920 rows and 240 steps for multi-step, including an explicit intermediate sender-after-obtain stage.

The gain concentrates where the hidden fact changes actor or continuation: Allocation improves from 83.3\% to 92.5\%, and Active Branch from 81.7\% to 100\%, while both variants reach 100\% on Destination and Recipe. The result identifies commitment horizon as an online control variable: committing farther both improves transfer and removes a replanning stage. Figure~\ref{fig:mechanisms}(c) summarizes success and online decisions.

\subsection{Quality-Matched Coordination Cost}

Across 2,304 closed-loop episodes, Short DSL, JSON, and controlled free-form each complete 128/128 held-out semantic clusters with 100\% raw validity, commitment validity, and verified handoff and with no repair or fallback. The exact-binomial 95\% lower bound is 0.972, and role-swap success differs by 0.0 for every surface. This pool locks closed-loop quality before comparing online representation cost. Table~\ref{tab:main} separates the 128-cluster quality and episode measurements from the 40-cluster same-card online-cost sample.

\begin{table*}[t]
\centering
\caption{Quality-matched communication surfaces. Quality and episode time use 128 held-out clusters; online cost uses the 40-cluster same-card subset. Wire counts the peer request for structured methods and all inter-agent dialogue for free-form.}
\label{tab:main}
\tabstyle
\begin{tabular*}{\textwidth}{@{\extracolsep{\fill}}lrrrrrrr@{}}
\toprule
\textbf{Method} & \textbf{Success $\uparrow$} & \textbf{Calls $\downarrow$} & \textbf{Input tok. $\downarrow$} & \textbf{Output tok. $\downarrow$} & \textbf{Wire (B) $\downarrow$} & \textbf{TTC p50/p95 (s) $\downarrow$} & \textbf{Episode p50/p95 (s) $\downarrow$} \\
\midrule
\textbf{GenCoord--DSL} & 128/128 & \textbf{2.00} & \textbf{949.1} & \textbf{78.6} & \textbf{76.3} & \textbf{2.27 / 2.62} & \textbf{24.86 / 27.06} \\
GenCoord--JSON & 128/128 & \textbf{2.00} & 1154.7 & 267.6 & 303.4 & 7.07 / 7.86 & 29.94 / 31.82 \\
Controlled free-form & 128/128 & 2.50 & 1585.2 & 265.8 & 1061.1 & 7.13 / 7.86 & 30.09 / 31.92 \\
\bottomrule
\end{tabular*}
\end{table*}

With closed-loop quality fixed, Short DSL improves on controlled free-form on every reported online cost. Free-form receives 2,400 training rows and 300 updates, versus 1,920 and 240 for Short DSL; Short DSL reduces calls by 20\%, output tokens by 70.4\%, wire bytes by 92.8\%, median TTC by 68.2\%, and descriptive median episode time by 17.4\%. Relative to JSON, it reduces output tokens by 70.6\%, wire bytes by 74.9\%, median TTC by 67.9\%, and descriptive median episode time by 17.0\%. The structured rows share sender-proposal and receiver-after-request calls and canonical fields, isolating serialization cost. Free-form measures the full surface, including multi-turn context, extraction, calls, and training budget. Median per-call latency is 1.14 seconds for Short DSL, 3.53 for JSON, and 3.27 for free-form; the shared executor takes approximately 22.6 seconds.

\subsection{Specialized Commitment Backends}

\textbf{Composition reference.} On 60 unseen binding cross-products, an enumerated case table reaches 0\%, while the Explicit-Factor Composer and Direct Short DSL each reach 100\%, confirming factor-wise construction beyond memorized complete cases.

\textbf{Matched model backends.} Direct Short DSL predicts complete role-local paths; Factor-Code + Rule predicts stage-specific binding codes and delegates path materialization to the deterministic composer. Both complete 128/128 semantic clusters (768/768 seed-role views) with perfect parsing, commitment validity, and verified handoff. On the separate 40-cluster same-card subset, Factor-Code + Rule uses slightly longer input context but reduces mean output by 75.5\% and TTC p50 from 2.453 to 0.824 seconds. Both transmit the same 76.6-byte peer request. Table~\ref{tab:realization} reports the matched profile.

\begin{table}[t]
\centering
\caption{Matched model backends for the same peer-facing commitment interface. Quality uses 128 clusters; cost uses a separate 40-cluster same-card subset.}
\label{tab:realization}
\tabstyle
\begin{tabularx}{\columnwidth}{@{}
>{\hsize=1.15\hsize\linewidth=\hsize\raggedright\arraybackslash}X
>{\hsize=0.925\hsize\linewidth=\hsize\centering\arraybackslash}X
>{\hsize=0.925\hsize\linewidth=\hsize\centering\arraybackslash}X@{}}
\toprule
\textbf{Metric} & \textbf{Direct Short DSL} & \textbf{Factor-Code + Rule} \\
\midrule
Success $\uparrow$ & \textbf{128/128} & \textbf{128/128} \\
Input tok. $\downarrow$ & \textbf{948.9} & 980.2 \\
Output tok. $\downarrow$ & 79.1 & \textbf{19.4} \\
Wire (B) $\downarrow$ & \textbf{76.6} & \textbf{76.6} \\
TTC p50/p95 (s) $\downarrow$ & 2.453 / 2.723 & \textbf{0.824 / 0.997} \\
\bottomrule
\end{tabularx}
\end{table}

The byte-identical peer payload separates the coordination interface from its model-side realization: the latency gain comes from shortening the autoregressive target while preserving the same validated commitment.

\subsection{Task-Structure Visibility}

Across 96 diagnostic clusters, Full DAG and Skeleton views each preserve the familiar post-handoff motif at 100\%; Skeleton removes instance-specific predecessors and edges while retaining coarse stages. Node-only additionally removes stage cues and changes node arrangement, reducing familiar-motif success to 18.2\%. Both unseen transform-order motifs remain at 0\% under all views. Coarse order therefore supports familiar path composition without the complete instance DAG, while new transformation orders remain the structural boundary; the full matrix appears in the supplement.

\section{Discussion}

\paragraph{Task consequence as the coordination primitive.}
The two intervention families expose the same content-to-route causal link in opposite directions: replacing \texttt{REQ} redirects receiver execution, while counterfactual feedback changes requester actor, handoff object, and suffix. The zero-call rule reproduces the content effect, identifying the binding as the cross-boundary control variable in the current task families; GenCoord couples it to the executable paths and discharge conditions that realize the joint route.

\paragraph{Commitment horizon as online control.}
A skill-path commitment fixes both who acts and how far the route proceeds before another model decision. \texttt{SELF+REQ} exposes the sender prefix, handoff boundary, peer suffix, typed arguments, and dependencies. Multi-step commitments remove an intermediate deliberation stage and improve template-shift success despite the single-step controller's additional supervision, linking horizon directly to online replanning.

\paragraph{Stable semantics, optimized realization.}
Tables~\ref{tab:main} and~\ref{tab:realization} expose two orthogonal layers. Surface compression replaces JSON or dialogue with Short DSL; formation compression replaces full-path decoding with a stage code and deterministic composition. Rule, factor, and direct backends all terminate at the same peer-facing interface. At that boundary, the typed checker produces a validated commitment, materialization instantiates the dispatch plan, and world predicates verify discharge. Explicit task fields therefore support causal intervention, modular backend replacement, and failure localization without changing coordination semantics.

\paragraph{Shared structure and information interfaces.}
Coarse stages preserve a familiar handoff motif after instance-specific edges are removed, whereas new transform orders remain unresolved. Let $b(z)$ be serialized byte length, $u_i^t$ uploaded local state, $d_i^t$ dispatched decision, $E_t$ active coordination edges, $\gamma_{ij}^t$ an edge exchange, and $R_c,R_l$ the respective round counts:
\begin{equation*}
\begin{aligned}
 B_{\mathrm{central}}&=\sum_{t=1}^{R_c}\sum_{i=1}^{n}[b(u_i^t)+b(d_i^t)],\\
 B_{\mathrm{local}}&=\sum_{t=1}^{R_l}\sum_{(i,j)\in E_t}b(\gamma_{ij}^t).
\end{aligned}
\end{equation*}
Centralized traffic follows synchronized state volume, participating agents, and rounds; local traffic follows active edges, commitment size, and local rounds. Both interfaces resolve all 40 Goal~$\times$~Capability clusters across three seeds, while exposing different information flows.

\paragraph{Limitations.}
The evaluation covers bilateral coordination over a shared executable prior in static Minecraft with reliable messaging. Dynamic multi-edge consistency and open-ended task decomposition remain outside the current evidence.

\section{Conclusion}

Joint skill paths become ambiguous when their determining facts are distributed across agents. GenCoord carries each fact's executable consequence in a \texttt{SELF+REQ} commitment: requests transmit sender-local bindings, while bounded feedback returns peer-local capability consequences. Counterfactual interventions make receiver execution and requester revision follow the delivered task consequence in their respective directions. Multi-step commitments reduce online decisions and improve held-out-template success. At matched closed-loop quality, Short DSL lowers every reported online cost relative to controlled free-form communication, while a factor-coded backend accelerates the same peer-facing interface. Skill-path commitments thus connect local generative reasoning to verified multi-agent action.

\clearpage
\balance
\bibliographystyle{ACM-Reference-Format}
\bibliography{references}

\end{document}


\begin{center}
  {\LARGE\bfseries Supplementary Material}\par
  \vspace{3pt}
  {\large\bfseries GenCoord: Skill-Path Commitments under Private Information}\par
\end{center}
\vspace{3pt}
\noindent\color{GCLine}\rule{\textwidth}{0.6pt}
\color{black}
\vspace{5pt}

\noindent\textbf{Reader map.}
Protocol definitions and exact resolution appear in Sections~\ref{sec:supp-lifecycle} and~\ref{sec:supp-runtime}; causal feedback evidence appears in Section~\ref{sec:supp-private}; the grounded trace and cost results appear in Sections~\ref{sec:supp-runtime} and~\ref{sec:supp-quality}; backend formation and naturalized private-fact results appear in Section~\ref{sec:supp-backend}; reproducibility materials appear in Section~\ref{sec:supp-artifact}.
\vspace{3pt}

\section{Protocol Lifecycles and Stage Attribution}
\label{sec:supp-lifecycle}

\takeaway{Forward request resolution and bounded feedback resolution use the same executable commitment object while carrying task-relevant consequences of local information in opposite directions.}

Figure~\ref{fig:lifecycles} gives the protocol-level reading order for the two routes. Table~\ref{tab:lifecycle} aligns each route with its learned stages and transparent operations, while Table~\ref{tab:core-notation} fixes the symbols used by Algorithms~\ref{alg:forward}--\ref{alg:feedback}.

\begin{figure}[!htbp]
  \centering
  \includegraphics[width=\textwidth]{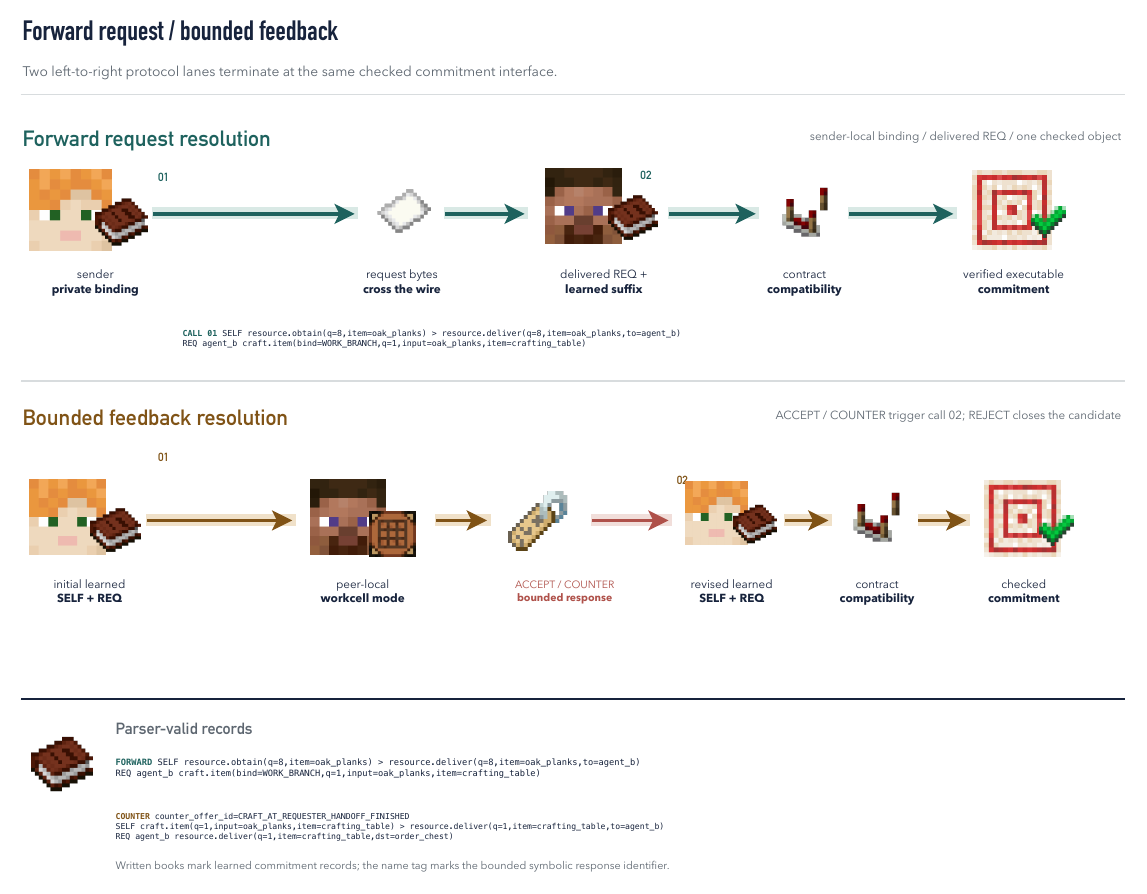}
  \Description{Two Minecraft-themed execution trails show a sender-local binding moving forward in a request and a bounded workcell consequence moving backward in feedback, with learned-call counts and one checked commitment at the end of each successful trail.}
  \caption{Forward and bounded-feedback lifecycles ending in the same contract-validated commitment interface.}
  \label{fig:lifecycles}
\end{figure}

\begin{table}[!htbp]
  \centering
  \caption{Lifecycle stages and learned-call attribution. Goal--Capability uses two learned calls on the evaluated \textsc{Accept}/\textsc{Counter} branches; \textsc{Reject} terminates after the first call. For the factor backend, \artifact{active\_binding\_id} is emitted at \artifact{sender\_initial} and \artifact{accepted\_binding\_id} at \artifact{receiver\_after\_request}.}
  \label{tab:lifecycle}
  \widetabstyle
  \begin{tabularx}{\textwidth}{L{1.90cm} L{2.40cm} Z L{2.70cm} Z C{1.50cm}}
    \toprule
    \thead{Setting} & \thead{First learned output / stage} & \thead{Boundary object} & \thead{Second learned output / stage} & \thead{Deterministic / non-learned operation} & \thead{Learned calls} \\
    \midrule
    Main forward route & \artifact{sender\_initial} & delivered \artifact{REQ} & \artifact{receiver\_}\newline\artifact{after\_request} & canonical joint resolution & 2 \\
    Goal--Capability & \artifact{requester\_}\newline\artifact{initial} & bounded capability response & \artifact{requester\_}\newline\artifact{after\_response} & \artifact{ACCEPT}/\artifact{COUNTER} policy & 2 \\
    Single-step route & \artifact{sender\_initial} & request and state transition & \artifact{sender\_after\_}\newline\artifact{obtain}, then receiver & task transition & episode-\newline dependent \\
    Factor backend & \artifact{active\_}\newline\artifact{binding\_id}\newline(sender) & delivered \artifact{REQ} & \artifact{accepted\_}\newline\artifact{binding\_id}\newline(receiver) & deterministic composer & 2 \\
    \bottomrule
  \end{tabularx}
\end{table}

\begin{table}[!htbp]
  \centering
  \caption{Core notation used by Algorithms~\ref{alg:forward}--\ref{alg:feedback}.}
  \label{tab:core-notation}
  \tabstyle
  \begin{tabularx}{\textwidth}{L{3.55cm} Z}
    \toprule
    \thead{Symbol} & \thead{Meaning and domain} \\
    \midrule
    $\mathcal A=\{a_1,\ldots,a_n\}$; $i,j\in\{1,\ldots,n\}$ & agent team and indices; protocol roles are written as \emph{snd}, \emph{rcv}, \emph{req}, and \emph{peer} \\
    $\mathcal K=(\mathcal S,G,\Theta,\Gamma)$ & reusable executable prior: skill catalog $\mathcal S$, public task DAG $G$, typed schema $\Theta$, and grounding/checker contract $\Gamma$ \\
    $\xi_{\mathrm{pub}}$, $\omega_t$, $x_i^t$ & public episode instantiation, current executable world state, and agent $i$'s model-visible local context at round $t$; $\omega_t$ is visible only to the runtime verifier \\
    $\hat y_{\mathrm{snd}},q_{\mathrm{snd}}$; $\hat y_{\mathrm{rcv}},q_{\mathrm{rcv}}$ & raw model strings and parsed sender/receiver proposals; $q=\bot$ denotes parse failure \\
    $r_{\mathrm{snd}\rightarrow\mathrm{rcv}}$, $u_{\mathrm{rcv}}$ & requested peer path and receiver-local realization suffix \\
    $c_{\mathrm{peer}}$, $\rho$, $\alpha$ & peer capability, bounded response, and selected \artifact{counter\_offer\_id} \\
    $N_{\mathcal K}^{\mathrm{path}}$, $N_{\mathcal K}^{\mathrm{prop}}$, $\operatorname{Actorize}$ & canonical normalization for paths/proposals and explicit instantiation of an implicit task actor \\
    $\mathcal C$, $\mathcal D$, $\mathcal E$, $\mathcal F$ & spaces of resolved commitments, dispatch plans, no-commitment reasons, and protocol/runtime failure codes \\
    \bottomrule
  \end{tabularx}
\end{table}

The protocol result type is the tagged union
\[
\mathcal R=
(\{\mathtt{SUCCESS}\}\times\mathcal C\times\mathcal D)
\;\uplus\;
(\{\mathtt{NO\_COMMITMENT}\}\times\mathcal E)
\;\uplus\;
(\{\mathtt{FAIL}\}\times\mathcal F).
\]
We write its values as $\operatorname{Success}(C,D)$, $\operatorname{NoCommitment}(e)$, and $\operatorname{Fail}(f)$.

\begin{algorithm}[!htbp]
  \caption{Forward resolution}
  \label{alg:forward}
  \footnotesize
  \begin{algorithmic}[1]
    \Require sender context $x_{\mathrm{snd}}$, receiver context $x_{\mathrm{rcv}}$, prior $\mathcal K$, public episode instantiation $\xi_{\mathrm{pub}}$, executable world $\omega_t$
    \State $\hat y_{\mathrm{snd}} \gets \textsc{Generate}(x_{\mathrm{snd}},\mathcal K,\xi_{\mathrm{pub}})$ \Comment{\artifact{SELF+REQ}}
    \State $q_{\mathrm{snd}} \gets \textsc{Parse}(\hat y_{\mathrm{snd}})$
    \If{$q_{\mathrm{snd}}=\bot$} \State \Return $\textsc{Fail}(\mathtt{PARSE\_FAILURE})$ \EndIf
    \If{$\neg\textsc{ContractCheck}(q_{\mathrm{snd}},x_{\mathrm{snd}},\mathcal K,\xi_{\mathrm{pub}})$}
      \State \Return $\textsc{Fail}(\mathtt{CONTRACT\_REJECT})$
    \EndIf
    \If{$|q_{\mathrm{snd}}.\mathtt{peer\_requests}|\neq1$ \textbf{or} $q_{\mathrm{snd}}.\mathtt{peer\_requests}[0].\mathtt{requested\_plan}=[]$}
      \State \Return $\textsc{Fail}(\mathtt{CONTRACT\_REJECT})$
    \EndIf
    \State $r_{\mathrm{snd}\rightarrow\mathrm{rcv}}\gets q_{\mathrm{snd}}.\mathtt{peer\_requests}[0]$; deliver $r_{\mathrm{snd}\rightarrow\mathrm{rcv}}$
    \State $\hat y_{\mathrm{rcv}} \gets \textsc{Generate}(x_{\mathrm{rcv}},\mathcal K,\xi_{\mathrm{pub}},r_{\mathrm{snd}\rightarrow\mathrm{rcv}})$
    \State $q_{\mathrm{rcv}} \gets \textsc{Parse}(\hat y_{\mathrm{rcv}})$
    \If{$q_{\mathrm{rcv}}=\bot$} \State \Return $\textsc{Fail}(\mathtt{PARSE\_FAILURE})$ \EndIf
    \If{$\neg\textsc{ContractCheck}(q_{\mathrm{rcv}},x_{\mathrm{rcv}},\mathcal K,\xi_{\mathrm{pub}})$ \textbf{or} $q_{\mathrm{rcv}}.\mathtt{self\_plan}=[]$ \textbf{or} $q_{\mathrm{rcv}}.\mathtt{peer\_requests}\neq[]$}
      \State \Return $\textsc{Fail}(\mathtt{CONTRACT\_REJECT})$
    \EndIf
    \State $C \gets \textsc{Resolve}_{\rightarrow}(q_{\mathrm{snd}},q_{\mathrm{rcv}},\mathcal K,\xi_{\mathrm{pub}})$
    \If{$C=\bot$} \State \Return $\textsc{Fail}(\mathtt{RESOLUTION\_CONFLICT})$ \EndIf
    \State $D \gets \textsc{Materialize}(C,\mathcal K,\xi_{\mathrm{pub}})$
    \If{$D=\bot$} \State \Return $\textsc{Fail}(\mathtt{MATERIALIZATION\_FAILURE})$ \EndIf
    \State $\mathtt{status}\gets\textsc{ExecuteAndVerify}(D,\omega_t)$
    \If{$\mathtt{status}\neq\mathtt{SUCCESS}$} \State \Return $\textsc{Fail}(\mathtt{status})$ \EndIf
    \State \Return $\textsc{Success}(C,D)$
  \end{algorithmic}
\end{algorithm}

\begin{algorithm}[!htbp]
  \caption{Bounded feedback resolution}
  \label{alg:feedback}
  \footnotesize
  \begin{algorithmic}[1]
    \Require requester context $x_{\mathrm{req}}$, peer capability $c_{\mathrm{peer}}$, prior $\mathcal K$, public episode instantiation $\xi_{\mathrm{pub}}$, executable world $\omega_t$
    \State $\hat y_0 \gets \textsc{Generate}(x_{\mathrm{req}},\mathcal K,\xi_{\mathrm{pub}})$
    \State $q_0 \gets \textsc{Parse}(\hat y_0)$
    \If{$q_0=\bot$} \State \Return $\textsc{Fail}(\mathtt{PARSE\_FAILURE})$ \EndIf
    \If{$\neg\textsc{ContractCheck}(q_0,x_{\mathrm{req}},\mathcal K,\xi_{\mathrm{pub}})$}
      \State \Return $\textsc{Fail}(\mathtt{CONTRACT\_REJECT})$
    \EndIf
    \If{$|q_0.\mathtt{peer\_requests}|\neq1$ \textbf{or} $q_0.\mathtt{peer\_requests}[0].\mathtt{requested\_plan}=[]$}
      \State \Return $\textsc{Fail}(\mathtt{CONTRACT\_REJECT})$
    \EndIf
    \State $r_{\mathrm{req}\rightarrow\mathrm{peer}}\gets q_0.\mathtt{peer\_requests}[0]$
    \State $\rho \gets \textsc{ResponsePolicy}(r_{\mathrm{req}\rightarrow\mathrm{peer}},c_{\mathrm{peer}},\mathcal K,\xi_{\mathrm{pub}})$
    \If{$\neg\textsc{ResponseValid}(\rho,r_{\mathrm{req}\rightarrow\mathrm{peer}},\mathcal K,\xi_{\mathrm{pub}})$}
      \State \Return $\textsc{Fail}(\mathtt{CONTRACT\_REJECT})$
    \EndIf
    \If{$\rho=\textsc{Reject}$}
      \State \Return $\textsc{NoCommitment}(\rho.\mathtt{reason\_code})$
    \EndIf
    \State $\hat y_1 \gets \textsc{Generate}(x_{\mathrm{req}},\mathcal K,\xi_{\mathrm{pub}},q_0,\rho)$ \Comment{ACCEPT or COUNTER}
    \State $q_1 \gets \textsc{Parse}(\hat y_1)$
    \If{$q_1=\bot$} \State \Return $\textsc{Fail}(\mathtt{PARSE\_FAILURE})$ \EndIf
    \If{$\neg\textsc{ContractCheck}(q_1,x_{\mathrm{req}},\mathcal K,\xi_{\mathrm{pub}})$}
      \State \Return $\textsc{Fail}(\mathtt{CONTRACT\_REJECT})$
    \EndIf
    \If{$|q_1.\mathtt{peer\_requests}|\neq1$ \textbf{or} $q_1.\mathtt{peer\_requests}[0].\mathtt{requested\_plan}=[]$}
      \State \Return $\textsc{Fail}(\mathtt{CONTRACT\_REJECT})$
    \EndIf
    \If{$q_1.\mathtt{peer\_requests}[0].\mathtt{target}\neq r_{\mathrm{req}\rightarrow\mathrm{peer}}.\mathtt{target}$}
      \State \Return $\textsc{Fail}(\mathtt{CONTRACT\_REJECT})$
    \EndIf
    \State $C \gets \textsc{Resolve}_{\leftarrow}(q_0,q_1,\rho,\mathcal K,\xi_{\mathrm{pub}})$
    \If{$C=\bot$} \State \Return $\textsc{Fail}(\mathtt{RESOLUTION\_CONFLICT})$ \EndIf
    \State $D \gets \textsc{Materialize}(C,\mathcal K,\xi_{\mathrm{pub}})$
    \If{$D=\bot$} \State \Return $\textsc{Fail}(\mathtt{MATERIALIZATION\_FAILURE})$ \EndIf
    \State $\mathtt{status}\gets\textsc{ExecuteAndVerify}(D,\omega_t)$
    \If{$\mathtt{status}\neq\mathtt{SUCCESS}$} \State \Return $\textsc{Fail}(\mathtt{status})$ \EndIf
    \State \Return $\textsc{Success}(C,D)$
  \end{algorithmic}
\end{algorithm}
\FloatBarrier

The response policy selects among admissible alternatives from the grounded skill hierarchy. A response is $\rho\in\{\textsc{Accept},\textsc{Reject},\textsc{Counter}(\alpha)\}$. The bounded alternative $\alpha$ is serialized in the E1/E2 artifact as \artifact{counter\_offer\_id}; its observed counter value is \artifact{CRAFT\_AT\_REQUESTER\_HANDOFF\_FINISHED}. \artifact{ACCEPT} retains the proposed branch without an alternative, \artifact{COUNTER} names exactly one admissible alternative, and \artifact{REJECT} closes the candidate without a second generation or dispatch. Accepted and countered proposals both condition a second requester call. The transparent policy contributes zero learned calls and exposes the bounded capability consequence explicitly; Table~\ref{tab:response-schema} gives the complete response contract. The world state $\omega_t$ remains outside the model and response-policy inputs and is accessed by execution and verification.

Algorithms~\ref{alg:forward}--\ref{alg:feedback} keep parsing, contract validation, resolution, materialization, and world discharge distinct. The terminal verifier returns one of \artifact{SUCCESS}, \artifact{EXECUTION\_FAILURE}, \artifact{HANDOFF\_FAILURE}, or \artifact{TERMINAL\_FAILURE}; earlier stages return the failure codes shown in the algorithms.

\begin{table}[!htbp]
  \centering
  \caption{Finite response schema and stage-local failure codes.}
  \label{tab:response-schema}
  \tabstyle
  \begin{tabularx}{\textwidth}{L{2.45cm} Z Z}
    \toprule
    \thead{Decision / stage} & \thead{Valid fields or condition} & \thead{Protocol consequence} \\
    \midrule
    \textsc{Accept} & reason code; no \artifact{counter\_offer\_id} & second requester call preserves the normalized initial branch \\
    $\textsc{Counter}(\alpha)$ & reason code and exactly one admissible \artifact{counter\_offer\_id}=$\alpha$ & second requester call realizes the selected bounded branch \\
    \textsc{Reject} & reason code; no counter alternative & candidate closes; no second generation, commitment, or dispatch \\
    Invalid response & unknown decision, forbidden field, missing alternative, or inadmissible ID & \artifact{CONTRACT\_REJECT} \\
    \midrule
    Parse & raw string cannot be decoded & \artifact{PARSE\_FAILURE} \\
    Contract & schema, role, skill, argument, target, or response check fails & \artifact{CONTRACT\_REJECT} \\
    Resolution & realization, boundary fields, branch, or dependency merge conflicts & \artifact{RESOLUTION\_CONFLICT} \\
    Materialization & validated commitment cannot instantiate a dispatch & \artifact{MATERIALIZATION\_FAILURE} \\
    World discharge & runtime action, inventory handoff, or terminal predicate fails & \artifact{EXECUTION\_FAILURE}, \artifact{HANDOFF\_FAILURE}, or \artifact{TERMINAL\_FAILURE} \\
    \bottomrule
  \end{tabularx}
\end{table}

\subsection{Resolver semantics}

As in the main paper, a grounded task is $\tau=(a,k,\mathbf v,\ell,P)$: actor $a$, skill identifier $k$, typed arguments $\mathbf v$, grounding location $\ell$, and predecessor set $P$. A resolved commitment has the common operational form
\[
C=(S_i,S_j,b,P_C)\in\mathcal C,
\]
where $S_i$ and $S_j$ are the two normalized role-local paths, $b$ is the selected binding identifier from the public codebook, and $P_C$ is the merged predecessor relation. Protocol-specific proposals, responses, and realization records remain in the trace; $C$ contains only the operative obligations that enter materialization.

The function $\operatorname{Actorize}(z,a)$ instantiates every implicit actor in path $z$ as $a$. We use $N_{\mathcal K}^{\mathrm{path}}$ for canonical path normalization and $N_{\mathcal K}^{\mathrm{prop}}$ for proposal normalization; both canonicalize aliases, typed defaults, grounded references, bindings, actors, and predecessor semantics.

\paragraph{Forward resolution.}
Define
\[
S_{\mathrm{snd}}=N_{\mathcal K}^{\mathrm{path}}\!\left(\operatorname{Actorize}(q_{\mathrm{snd}}.\mathtt{self\_plan},\mathrm{snd})\right),
\]
\[
R_{\mathrm{rcv}}=N_{\mathcal K}^{\mathrm{path}}\!\left(\operatorname{Actorize}(r_{\mathrm{snd}\rightarrow\mathrm{rcv}}.\mathtt{requested\_plan},\mathrm{rcv})\right),
\quad
U_{\mathrm{rcv}}=N_{\mathcal K}^{\mathrm{path}}\!\left(\operatorname{Actorize}(q_{\mathrm{rcv}}.\mathtt{self\_plan},\mathrm{rcv})\right).
\]
Within the evaluated canonical contract, the receiver realizes the request exactly when
\begin{equation}
\operatorname{Realizes}_{\mathcal K}(U_{\mathrm{rcv}},R_{\mathrm{rcv}})
\iff U_{\mathrm{rcv}}=R_{\mathrm{rcv}}.
\label{eq:realizes}
\end{equation}
This equality preserves ordered task count, skills, typed values, bindings, actors, locations, and dependency semantics after normalization. Textual argument order and accepted aliases may differ. Canonical realization contains exactly the requested model-owned tasks; deterministic executor bookkeeping is added after resolution.

Let $\operatorname{HandoffCompatible}_{\mathcal K,\xi_{\mathrm{pub}}}(S_i,S_j)$ denote compatibility of the producing/receiving actors, item, quantity, source/destination references, and boundary dependency under the public episode instantiation. $\textsc{Resolve}_{\rightarrow}$ returns
\[
C^{\rightarrow}=(S_{\mathrm{snd}},U_{\mathrm{rcv}},b,P_C)
\]
only when (i) the receiver identity equals the request target; (ii) $q_{\mathrm{rcv}}$ has a nonempty \artifact{SELF} suffix and no outbound request; (iii) Equation~\ref{eq:realizes} holds; (iv) $\operatorname{HandoffCompatible}_{\mathcal K,\xi_{\mathrm{pub}}}(S_{\mathrm{snd}},U_{\mathrm{rcv}})$ holds; and (v) $b$ yields a complete acyclic predecessor merge $P_C$. The merge preserves each role-local order, retains public predecessors on the selected branch, and adds a cross-agent dependency from the contract-validated handoff producer to the first consuming receiver node. The handoff becomes \emph{verified} only after execution. Missing fields, incompatible handoffs, multiple selected branches, or cycles yield $\bot$.

\paragraph{Feedback resolution.}
Let $\mathcal B_{\mathcal K}$ be the public branch set, $B:\mathcal Q_{\mathrm{legal}}\rightarrow\mathcal B_{\mathcal K}\cup\{\bot\}$ extract the normalized branch selected by a proposal, and $\operatorname{AltBranch}_{\mathcal K,\xi_{\mathrm{pub}}}:\mathcal I_{\mathrm{alt}}\rightarrow\mathcal B_{\mathcal K}\cup\{\bot\}$ map an admissible alternative identifier to its unique branch. The response-conditioned branch is
\[
B_{\mathcal K,\xi_{\mathrm{pub}}}^{\mathrm{resp}}(\rho,q_0)=
\begin{cases}
B(q_0), & \rho=\textsc{Accept},\\
\operatorname{AltBranch}_{\mathcal K,\xi_{\mathrm{pub}}}(\alpha), & \rho=\textsc{Counter}(\alpha),\\
\bot, & \rho=\textsc{Reject}.
\end{cases}
\]
For \textsc{Accept}, the second record may use accepted surface variants while preserving the normalized operative branch, target peer, and requester goal fields of $q_0$. For $\textsc{Counter}(\alpha)$, $q_1$ preserves the requester goal fields and target peer, replaces the initial branch with the selected alternative, and realizes that branch in its operative peer request. $\textsc{Reject}$ closes the candidate after the first call.

For accepted and countered proposals, define
\[
S_{\mathrm{req}}=N_{\mathcal K}^{\mathrm{path}}\!\left(\operatorname{Actorize}(q_1.\mathtt{self\_plan},\mathrm{req})\right),
\]
\[
R_{\mathrm{peer}}=N_{\mathcal K}^{\mathrm{path}}\!\left(\operatorname{Actorize}(q_1.\mathtt{peer\_requests}[0].\mathtt{requested\_plan},\mathrm{peer})\right).
\]
The response itself is the peer-side resolution record, and the route proceeds directly to requester revision. $\textsc{Resolve}_{\leftarrow}$ requires
\[
B(q_1)=B_{\mathcal K,\xi_{\mathrm{pub}}}^{\mathrm{resp}}(\rho,q_0),
\]
$\operatorname{HandoffCompatible}_{\mathcal K,\xi_{\mathrm{pub}}}(S_{\mathrm{req}},R_{\mathrm{peer}})$, and a complete acyclic merge $P_C$, then returns
\[
C^{\leftarrow}=(S_{\mathrm{req}},R_{\mathrm{peer}},b,P_C).
\]
The protocol trace retains $q_0$ and $\rho$ as provenance, while $C^{\leftarrow}$ contains only the resolved operative paths. The materializer constructs
\[
D=\textsc{Materialize}(C,\mathcal K,\xi_{\mathrm{pub}}).
\]
It preserves the complete high-level obligation set while instantiating validated actors, locations, task-instance identifiers, dependencies, and checker attachments. The compiler then adds deterministic navigation, inventory-transfer mechanics, timeouts, claims, and runtime metadata required to realize $D$ as Mineflayer calls.

\section{Task Suite and Programmatic Supervision}
\label{sec:supp-data}

\takeaway{Four forward-request task families place the path-determining binding in the sender's local view; the delivered request disambiguates the receiver's two admissible branches.}

Table~\ref{tab:task-inventory} enumerates the eight templates and the commitment field changed by each balanced binding. Table~\ref{tab:supervision} then shows how every template produces reciprocal sender-proposal and receiver-realization supervision.

\begin{table}[!htbp]
  \centering
  \caption{Complete inventory of the eight core task templates and the commitment field varied by each binding.}
  \label{tab:task-inventory}
  \widetabstyle
  \setlength{\tabcolsep}{2.0pt}
  \begin{tabularx}{\textwidth}{L{1.65cm} L{2.15cm} L{1.75cm} Z L{2.90cm} L{1.95cm}}
    \toprule
    \thead{Family} & \thead{Template} & \thead{Source material} & \thead{Balanced bindings} & \thead{Binding-dependent request path} & \thead{Changed field} \\
    \midrule
    Destination & Build Site A/B & oak planks & \artifact{SITE\_A}/\artifact{SITE\_B} & \artifact{build.component} & destination \\
    Destination & Chest A/B & cobblestone & \artifact{CHEST\_A}/\artifact{CHEST\_B} & \artifact{resource.deliver} & destination \\
    Recipe & Chest/Crafting Table & oak planks & \artifact{CHEST}/\newline\artifact{CRAFTING\_TABLE} & \artifact{craft.item} & transform;\newline handoff item \\
    Recipe & Planks/Sticks & oak planks & \artifact{PLANKS}/\newline\artifact{STICKS} & \artifact{craft.item}/\newline\artifact{resource.deliver} & transform;\newline handoff item \\
    Allocation & Deposit/Build & cobblestone & \artifact{BUILD}/\artifact{DEPOSIT} & \artifact{build.component}/\newline\artifact{resource.deliver} & actor and remaining work \\
    Allocation & Dual Build & oak planks & \artifact{ROOF}/\artifact{WALL} & \artifact{build.component} & actor and component \\
    Active branch & Active Order & oak planks & \artifact{STORAGE}/\artifact{WORK} & \artifact{craft.item} & downstream continuation \\
    Active branch & Remaining Terminal & oak planks & \artifact{DEPOT}/\newline\artifact{MARKER} & \artifact{build.component}/\newline\artifact{resource.deliver} & downstream continuation \\
    \bottomrule
  \end{tabularx}
\end{table}

Each template has two equally represented admissible bindings. The active binding appears in the sender-local private field and is absent from the receiver-local view. Paired binding worlds keep the receiver's decision context matched until the request arrives, so removing the delivered request leaves the receiver unable to distinguish the two executable branches and produces the constructive 50\% ceiling. Every instance requires a verified material handoff and a binding-specific terminal world-state predicate.

\begin{table}[!htbp]
  \centering
  \caption{Programmatic supervision records for the forward route.}
  \label{tab:supervision}
  \tabstyle
  \begin{tabularx}{\textwidth}{L{2.3cm} Z R{1.0cm}}
    \toprule
    \thead{Record stage} & \thead{Target object} & \thead{Rows} \\
    \midrule
    Sender proposal & multi-step \artifact{SELF} plus peer \artifact{REQ} & 960 \\
    Receiver after request & downstream \artifact{SELF}; empty \artifact{REQ} & 960 \\
    Intermediate sender (single-step) & next local action after obtain & 960 \\
    \bottomrule
  \end{tabularx}
\end{table}

Targets are instantiated from task-template specifications, sampled local facts, and canonical task objects. Public Minecraft resources support ontology and scenario design; the reciprocal targets come from the executable template contract. The data builder writes the model-visible context and target separately, then validates that each target maps back to the intended binding and terminal predicate.

\section{Training and Checkpoint Provenance}
\label{sec:supp-training}

\takeaway{The checkpoint inventory makes stage coverage and optimization budget explicit. Multi-step training uses 1,920 rows and 240 updates per seed, while single-step receives an additional intermediate stage, 2,880 rows, and 360 updates.}

Tables~\ref{tab:training-budgets} and~\ref{tab:training-provenance} report the matched model configuration, stage counts, update budgets, data hashes, and checkpoint identities.

\begin{table}[!htbp]
  \centering
  \caption{Training budgets and stage coverage per random seed.}
  \label{tab:training-budgets}
  \widetabstyle
  \begin{tabularx}{\textwidth}{L{2.8cm} R{1.1cm} Z C{1.15cm} R{1.25cm} C{0.95cm}}
    \toprule
    \thead{Model or condition} & \thead{Rows} & \thead{Stage counts} & \thead{Epochs} & \thead{Steps} & \thead{Seeds} \\
    \midrule
    Multi-step / direct / surface & 1,920 & 960 sender + 960 receiver & 2 & 240 & 3 \\
    Self-plan-only & 1,920 & 960 sender + 960 receiver & 2 & 240 & 3 \\
    Matched-SFT free-form & 2,400 & 1,200 sender + 1,200 receiver & 2 & 300 & 3 \\
    Single-step & 2,880 & 960 sender + 960 intermediate sender + 960 receiver & 2 & 360 & 3 \\
    Factor backend & 1,920 & 960 sender code + 960 receiver code & 2 & 240 & 3 \\
    \bottomrule
  \end{tabularx}
\end{table}

\begin{table}[!htbp]
  \centering
  \caption{Shared training configuration and horizon-model provenance.}
  \label{tab:training-provenance}
  \widetabstyle
  \begin{tabularx}{\textwidth}{L{3.0cm} Z Z}
    \toprule
    \thead{Field} & \thead{Multi-step} & \thead{Single-step} \\
    \midrule
    \multicolumn{3}{@{}l}{\textbf{(a) Common configuration}} \\
    Base model & \multicolumn{2}{l}{Qwen3.5-0.8B \cite{qwenteam2026qwen35}; tree \artifact{1ba0a4ab\ldots}} \\
    Optimizer & \multicolumn{2}{l}{AdamW; LR $10^{-5}$; weight decay 0.01; cosine schedule; warm-up 0.03; grad-norm 1.0} \\
    Batch / precision & \multicolumn{2}{l}{micro 4; accumulation 4; effective 16; bfloat16 on A100 40GB} \\
    Length / decoding & \multicolumn{2}{l}{maximum 2,048; deterministic decoding; structured cap 256 tokens} \\
    Objective / selection & \multicolumn{2}{l}{target-and-EOS causal loss; final-step checkpoint; DEV reserved for implementation checks} \\
    \midrule
    \multicolumn{3}{@{}l}{\textbf{(b) Model-specific data and checkpoint identity}} \\
    Training rows / updates & 1,920 / 240 & 2,880 / 360 \\
    Training-data SHA-256 prefix & \artifact{f31ba0b5} & \artifact{8bdbf220} \\
    Config SHA-256 prefix & \artifact{d965b3e7} & \artifact{d965b3e7} \\
    Seed IDs & \artifact{2026073101--03} & \artifact{2026073101--03} \\
    Final checkpoint prefixes & \artifact{66e1f8c0}, \artifact{4b99b075}, \artifact{48df1656} & \artifact{9c0bd589}, \artifact{f82d702f}, \artifact{42705cf5} \\
    \bottomrule
  \end{tabularx}
\end{table}

The project-side training record stores the complete data-manifest hash, configuration hash, base-model tree hash, seed, runtime versions, token exposure, checkpoint identity, and final checkpoint tree hash. The public arXiv artifact retains the scientific configuration and content hashes while removing host, scheduler, process, and private-path identifiers.

\section{Commitment Schema and Grounded Runtime}
\label{sec:supp-runtime}

\takeaway{The model emits a compact symbolic commitment. A typed checker validates its semantics, a canonical materializer instantiates the dispatch plan, and the compiler lowers that plan to Mineflayer skills.}

\subsection{Short DSL grammar}

\begin{lstlisting}[style=gcpdsl]
program   ::= self_line NEWLINE req_line
self_line ::= "SELF -" | "SELF " path
req_line  ::= "REQ -" | "REQ " agent " " path
path      ::= task (" > " task){0,4}
task      ::= skill "(" [args] ")"
args      ::= arg ("," arg)*
arg       ::= key "=" value
key       ::= ident
value     ::= json_string | json_number
            | "true" | "false" | "null" | atom
skill     ::= control.wait | resource.obtain
            | resource.deliver | craft.item
            | transform.supply_input
            | transform.supply_fuel
            | transform.collect_output
            | build.component
ident     ::= [A-Za-z][A-Za-z0-9_]*
agent     ::= atom
atom      ::= [A-Za-z0-9_.:/+-]+
json_number ::= JSON numeric literal
json_string ::= RFC 8259 double-quoted string
\end{lstlisting}

The displayed BNF defines the canonical serializer output. Value recognition follows the displayed priority: a leading double quote starts a JSON string; a complete JSON number is recognized next; the three reserved literals precede the unquoted atom fallback. A quoted string is scanned atomically, so escaped quotes, commas, parentheses, and backslashes inside it do not terminate an argument or task. Standard JSON escapes are decoded before type checking, and unquoted atoms remain restricted to the ASCII class shown above. The canonical serializer emits exactly two LF-separated lines with normalized ASCII spacing and no leading or trailing whitespace; parser-tolerated noncanonical whitespace is outside the serializer invariant.

The serializer emits the compact keys \artifact{q}, \artifact{item}, \artifact{input}, \artifact{to}, \artifact{dst}, \artifact{bind}, \artifact{site}, \artifact{from}, \artifact{dst\_role}, \artifact{loc}, and \artifact{station}. The parser also accepts their expanded compatibility names, including \artifact{count}/\artifact{quantity}, \artifact{target\_agent\_ref}/\artifact{to\_agent\_id}, \artifact{destination}, \artifact{destination\_role}, \artifact{from\_agent\_id}, \artifact{location\_ref}, and \artifact{binding\_id}, then canonicalizes them before checking. Syntactic parsing accepts identifier-shaped keys and primitive JSON values; the skill-specific contract checker rejects unknown keys, invalid types, unknown agents, inadmissible bindings, and missing required fields. The codec rejects duplicate keys, paths outside the evaluated catalog, invalid atoms, trailing text, and an empty requested plan.

The parser requires exactly two lines and permits one to five ordered tasks per nonempty path. Quantities are positive integers; \artifact{to} names a peer; \artifact{dst}, \artifact{site}, \artifact{loc}, and \artifact{station} identify grounding references; \artifact{bind} selects an admissible branch from the public codebook. The separator \artifact{>} encodes commitment precedence; compilation expands the validated path into low-level action calls. \artifact{REQ -} is a legal record whose parsed \artifact{peer\_requests} value is the empty list; $\bot$ is reserved for parse failure, while contract rejection is represented separately by \artifact{CONTRACT\_REJECT}.

Let $\mathcal Y_{\mathrm{parse}}$ be syntactically admissible strings, $\mathcal Q_{\mathrm{parsed}}$ parsed proposal objects, and $\mathcal Q_{\mathrm{legal}}(\mathcal K,\xi_{\mathrm{pub}})$ the subset satisfying the typed contract. Define
\[
\operatorname{Parse}:\mathcal Y_{\mathrm{parse}}\rightarrow\mathcal Q_{\mathrm{parsed}}\cup\{\bot\},
\qquad
\operatorname{ContractCheck}_{\mathcal K,\xi_{\mathrm{pub}}}:\mathcal Q_{\mathrm{parsed}}\rightarrow\{\mathtt{true},\mathtt{false}\}.
\]
The legal-string domain is
\[
\mathcal Y_{\mathrm{legal}}(\mathcal K,\xi_{\mathrm{pub}})
=
\left\{y\in\mathcal Y_{\mathrm{parse}}:\operatorname{Parse}(y)\neq\bot,\;
\operatorname{ContractCheck}_{\mathcal K,\xi_{\mathrm{pub}}}(\operatorname{Parse}(y))=\mathtt{true}\right\}.
\]
Let $\mathcal Y_{\mathrm{canon}}(\mathcal K)$ be canonical serializer outputs. For legal strings, the semantic decoder and encoder are
\[
\operatorname{Decode}_{\mathcal K,\xi_{\mathrm{pub}}}(y)
=
N_{\mathcal K}^{\mathrm{prop}}\!\left(\operatorname{Parse}(y)\right),
\quad y\in\mathcal Y_{\mathrm{legal}}(\mathcal K,\xi_{\mathrm{pub}}),
\]
\[
\operatorname{Encode}_{\mathcal K}:\mathcal Q_{\mathrm{legal}}(\mathcal K,\xi_{\mathrm{pub}})\rightarrow\mathcal Y_{\mathrm{canon}}(\mathcal K).
\]
For legal DSL strings $y_1,y_2$,
\begin{equation}
y_1\equiv_{\mathrm{sem}}y_2
\iff
\operatorname{Decode}_{\mathcal K,\xi_{\mathrm{pub}}}(y_1)
=
\operatorname{Decode}_{\mathcal K,\xi_{\mathrm{pub}}}(y_2).
\end{equation}
For $Q\in\mathcal Q_{\mathrm{legal}}(\mathcal K,\xi_{\mathrm{pub}})$, the tested codec invariant is
\begin{equation}
\operatorname{Decode}_{\mathcal K,\xi_{\mathrm{pub}}}(\operatorname{Encode}_{\mathcal K}(Q))
=
N_{\mathcal K}^{\mathrm{prop}}(Q).
\end{equation}
The invariant is defined and verified over the fixed schema, evaluated task families, and shared executable prior used in this study. Within that contract, the decoded object retains the ordered self path, target peer, ordered requested path, typed arguments, bindings, actors, grounding references, and dependency semantics required to determine the resolved route.

\begin{lstlisting}[style=gcpdsl]
SELF resource.obtain(q=8,item=oak_planks) >
     resource.deliver(q=8,item=oak_planks,to=agent_a)
REQ agent_a craft.item(bind=WORK_BRANCH,q=1,
     input=oak_planks,item=crafting_table)
\end{lstlisting}

Table~\ref{tab:contract-checks} separates the checks that establish a valid commitment from the post-materialization and post-execution evidence that discharges it.

\begin{table}[!htbp]
  \centering
  \caption{Checks across parsing, materialization, and live verification.}
  \label{tab:contract-checks}
  \tabstyle
  \begin{tabularx}{\columnwidth}{L{2.35cm} L{2.25cm} Z}
    \toprule
    \thead{Phase} & \thead{Check} & \thead{Accepted condition} \\
    \midrule
    Pre-materialization & Syntax and identity & two-line grammar; actors and peer match the role instance \\
     & Skill and arguments & catalog path, required keys, types, and quantities are valid \\
     & Binding and handoff & branch is admissible; item, quantity, sender, and receiver agree \\
     & Dependency merge & selected predecessor graph is complete and acyclic \\
    Post-materialization & Static plan consistency & resolved actors, fields, handoff edge, and checker attachment match the validated commitment \\
    Post-execution & World verification & handoff event and declared terminal predicate both pass \\
    \bottomrule
  \end{tabularx}
\end{table}

The live route is
\begin{align*}
\text{generate}&\rightarrow\text{parse}\rightarrow\text{contract check}\\[-2pt]
&\rightarrow\text{canonical materialize}\rightarrow\text{compile}\\[-2pt]
&\rightarrow\text{execute}\rightarrow\text{verify}.
\end{align*}
Once the model-generated semantics equal the selected canonical task object, the materializer combines the selected binding with the shared task structure. Mineflayer~4.37.1 \cite{prismarinejs2026mineflayer} then executes the compiled skills against the official Minecraft Java server~1.21.4.

Figure~\ref{fig:runtime-trace} follows one retained episode from learned records to terminal world state; Table~\ref{tab:runtime-accounting} reports the exact calls, actions, inventory deltas, and wall-clock time for the same trace.

\begin{figure}[!htbp]
  \centering
  \includegraphics[width=\textwidth]{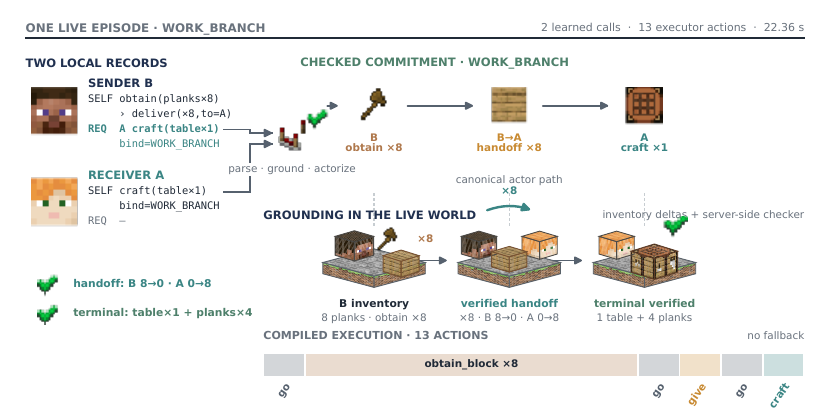}
  \Description{Two local Short DSL records merge into one checked actorized commitment. The compiled execution obtains eight oak planks, transfers them from Agent B to Agent A, and reaches the verified terminal inventory containing one crafting table and four remaining planks.}
  \caption{One grounded episode from bilateral Short DSL records to a checked commitment, verified handoff, and terminal world state.}
  \label{fig:runtime-trace}
\end{figure}

\begin{table}[!htbp]
  \centering
  \caption{Representative live episode accounting.}
  \label{tab:runtime-accounting}
  \tabstyle
  \begin{tabularx}{\textwidth}{Z R{2.30cm}}
    \toprule
    \thead{Event or state} & \thead{Observed value} \\
    \midrule
    Selected binding & \artifact{WORK\_BRANCH} \\
    Learned model calls & 2 \\
    Executor actions & 13 \\
    Sender oak planks, before $\rightarrow$ after & $8\rightarrow0$ \\
    Receiver oak planks, before $\rightarrow$ after & $0\rightarrow8$ \\
    Terminal crafting tables & 1 \\
    Wall-clock episode time & 22.36 s \\
    Handoff / terminal verification & pass / pass \\
    \bottomrule
  \end{tabularx}
\end{table}

Figure~\ref{fig:runtime-trace} uses role instance \apath{GCP-PAPER-REP-ACTIVE-ORDER-BRANCH-0015-R1}. The trace records inventory snapshots around the handoff, all issued skills, the terminal inventory, and the server-side checker result.

\section{Full Quality and Communication Results}
\label{sec:supp-quality}

\takeaway{At matched terminal quality, the three communication surfaces differ sharply in generated length, message size, and time-to-commitment while sharing the same executor.}

Table~\ref{tab:representation-results} establishes the quality lock and reports the exact online costs. Figure~\ref{fig:cost-fingerprint} makes the resulting cost profile visually comparable, and Table~\ref{tab:reliability-latency} separates same-card call latency from quality-pool reliability.

\begin{table}[!htbp]
  \centering
  \caption{Matched-quality representation results. Success uses the 128-cluster quality pool; calls, token counts, protocol-specific Agent-to-Agent payload bytes, and TTC use the separate 40-cluster same-card timing pool.}
  \label{tab:representation-results}
  \widetabstyle
  \begin{tabularx}{\textwidth}{Z R{1.2cm} R{0.8cm} R{1.15cm} R{1.15cm} R{1.1cm} R{1.15cm} R{1.15cm}}
    \toprule
    \thead{Method} & \thead{Success} & \thead{Calls} & \thead{Input tok.} & \thead{Output tok.} & \thead{Wire B} & \thead{TTC p50} & \thead{TTC p95} \\
    \midrule
    \dsl GenCoord (0.8B) & 100.0\% & 2.0 & 949.1 & 78.6 & 76.3 & 2.266 s & 2.616 s \\
    JSON GenCoord (0.8B) & 100.0\% & 2.0 & 1,154.7 & 267.6 & 303.4 & 7.067 s & 7.864 s \\
    Controlled free-form + structured commit (0.8B) & 100.0\% & 2.5 & 1,585.2 & 265.8 & 1,061.1 & 7.132 s & 7.857 s \\
    \bottomrule
  \end{tabularx}
\end{table}

\begin{figure}[!htbp]
  \centering
  \includegraphics[width=\textwidth]{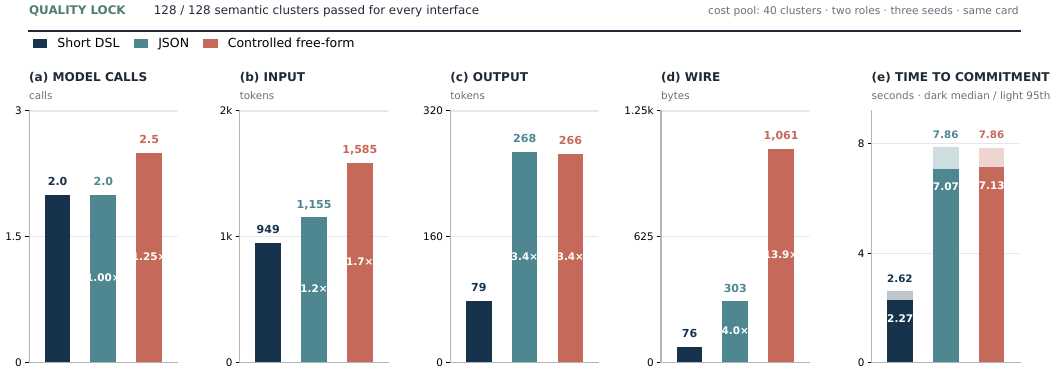}
  \Description{Five aligned bar panels compare model calls, input tokens, output tokens, Agent-to-Agent wire bytes, and same-card time-to-commitment for three interfaces that all pass the 128-cluster quality pool. In the TTC panel, the dark bar ends at p50, the median, and the pale extension ends at p95, the 95th percentile.}
  \caption{Quality-matched coordination costs on independent physical-unit axes. All three interfaces pass the 128-cluster quality pool; costs use the separate 40-cluster same-card timing pool. In the TTC panel, dark bars end at p50 (the median) and pale extensions end at p95 (the 95th percentile).}
  \label{fig:cost-fingerprint}
\end{figure}

\begin{table}[!htbp]
  \centering
  \caption{Same-card latency and quality-pool reliability. The two panels retain their separate 40- and 128-cluster denominators.}
  \label{tab:reliability-latency}
  \tabstyle
  \begin{tabularx}{\columnwidth}{Z R{1.25cm} R{1.25cm}}
    \toprule
    \multicolumn{3}{@{}l}{\textbf{(a) Call latency: 40-cluster same-card pool}} \\
    \thead{Surface} & \thead{p50} & \thead{p95} \\
    \midrule
    \dsl call latency & 1.143 s & 1.660 s \\
    JSON call latency & 3.531 s & 5.941 s \\
    Controlled free-form call latency & 3.274 s & 3.746 s \\
    \midrule
    \multicolumn{3}{@{}l}{\textbf{(b) Reliability and symmetry: 128-cluster quality pool}} \\
    Strict successful clusters, each surface & \multicolumn{2}{r}{128/128} \\
    Exact-binomial 95\% lower bound & \multicolumn{2}{r}{0.972} \\
    Role-swap success difference & \multicolumn{2}{r}{0.0 pp} \\
    Missing / fallback rows & \multicolumn{2}{r}{0 / 0} \\
    \bottomrule
  \end{tabularx}
\end{table}

The same-card timing pool contains 40 clusters, two role views, and three checkpoints per surface. Models run sequentially on the same device. TTC ends when the executable bilateral commitment is available; executor time begins after commitment and remains near 22.6 s p50 across surfaces.

\section{Request Semantics and Strong Baselines}
\label{sec:supp-request}

\takeaway{The delivered binding is causally active: removing it restores the paired 50\% ceiling, while replacing it with the other executable option drives the receiver to the paired world-state branch.}

Table~\ref{tab:request-intervention} holds message presence, protocol schedule, schema, and learned-call count fixed while changing only the delivered task content; the deterministic row supplies an exact binding-selection ceiling for the enumerable library.

\begin{table}[!htbp]
  \centering
  \caption{Request intervention and deterministic binding reference on 160 clusters. Learned rows contain three seeds (960 episodes); the seedless deterministic rule runs once per role view (320 episodes). Paired differences are true request minus the listed condition. Bytes are canonical request-object JSON, distinct from the protocol-specific Short DSL request-line bytes in Table~\ref{tab:representation-results}.}
  \label{tab:request-intervention}
  \widetabstyle
  \begin{tabularx}{\textwidth}{L{2.7cm} R{1.45cm} R{1.25cm} R{0.9cm} R{1.1cm} R{1.45cm} Z}
    \toprule
    \thead{Condition} & \thead{Episodes} & \thead{Success} & \thead{Calls} & \thead{Req.-object JSON B} & \thead{True $-$ condition} & \thead{Mechanism readout} \\
    \midrule
    True learned request & 960 & 100.0\% & 2.0 & 304.4 & 0.0 pp & reference success \\
    Request removed / self-plan-only & 960 & 50.0\% & 2.0 & 0.0 & $+50.0$ pp $[42.5,57.5]$ & paired ambiguity exposed \\
    Same-template executable alternative & 960 & 0.0\% & 2.0 & 304.4 & $+100.0$ pp $[100,100]$ & receiver follows delivered alternative \\
    Deterministic correct binding & 320 & 100.0\% & 0.0 & 304.4 & $0.0$ pp & exact binding ceiling \\
    \bottomrule
  \end{tabularx}
\end{table}

The alternative intervention selects the other executable request from the same template option set and preserves message presence, schedule, schema, approximate length, and learned-call count. The receiver follows the delivered alternative in 960/960 CONFIRM episodes and 471/471 executable held-out cases. Every executable held-out output preserves the delivered binding; the nine residual losses occur before an executable plan is formed.

\section{Commitment Horizon and Template Shift}
\label{sec:supp-horizon}

\takeaway{Under the corresponding closed-loop training protocols, multi-step commitments improve held-out success and reduce online decisions; the gain concentrates in allocation and active-continuation templates.}

Table~\ref{tab:horizon} reports the paired aggregate effects and family decomposition; Figure~\ref{fig:horizon-family} shows where the success gain appears and how much online burden remains.

\begin{table}[!htbp]
  \centering
  \caption{Held-out-template horizon results across 80 independent clusters. Every difference is multi-step minus single-step. Family rows are descriptive; confidence intervals are reported for the aggregate paired comparisons.}
  \label{tab:horizon}
  \widetabstyle
  \begin{tabularx}{0.995\textwidth}{L{2.25cm} R{1.2cm} R{1.2cm} R{1.15cm} L{2.35cm} Z}
    \toprule
    \thead{Family / metric} & \thead{Multi-step} & \thead{Single-step} & \thead{$\Delta$} & \thead{95\% CI} & \thead{Interpretive feature} \\
    \midrule
    Destination & 100.0\% & 100.0\% & 0.0 pp & \textemdash{} & destination binding \\
    Recipe & 100.0\% & 100.0\% & 0.0 pp & \textemdash{} & transformation binding \\
    Allocation & 92.5\% & 83.3\% & 9.2 pp & \textemdash{} & actor / remaining workload \\
    Active branch & 100.0\% & 81.7\% & 18.3 pp & \textemdash{} & downstream continuation \\
    \midrule
    Overall success & 98.1\% & 91.3\% & 6.9 pp & $[2.9,10.8]$ pp & paired cluster bootstrap \\
    Decisions / episode & 1.981 & 2.913 & $-0.931$ calls / ep. & $[-0.971,-0.892]$ calls / ep. & online coordination cost \\
    Input tokens / episode & 1,146.5 & 1,689.1 & $-542.6$ tok./ep. & $[-567.8,-517.6]$ tok./ep. & repeated context avoided \\
    \bottomrule
  \end{tabularx}
\end{table}

\begin{figure}[!htbp]
  \centering
  \includegraphics[width=\textwidth]{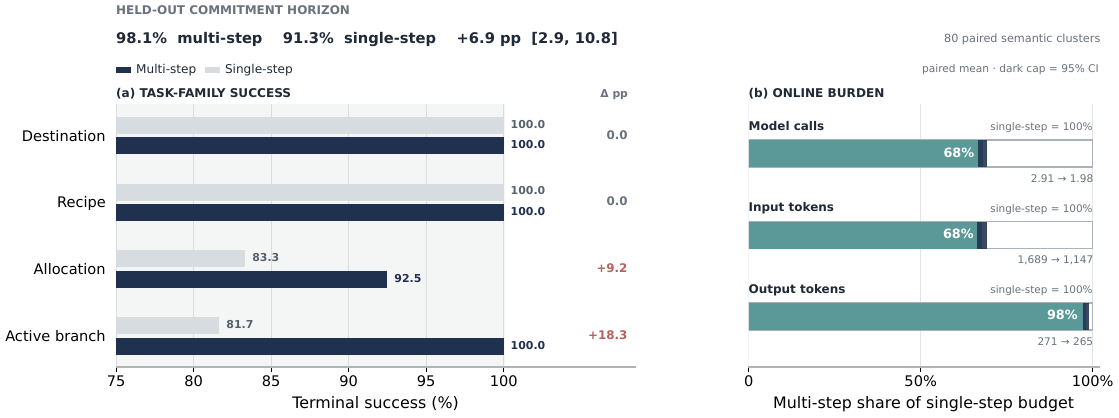}
  \Description{Panel a uses paired horizontal bars to compare multi-step and single-step terminal success within each task family and reports their percentage-point difference. Panel b represents single-step as a full 100-percent budget outline and fills the share consumed by multi-step for model calls, input tokens, and output tokens; the dark cap maps the paired absolute-difference interval to the observed single-step denominator.}
  \caption{Paired commitment-horizon comparison over 80 held-out semantic clusters. (a) Task-family terminal success. (b) Multi-step online burden as a share of the corresponding single-step mean; dark caps map the paired absolute-difference confidence intervals from Table~\ref{tab:horizon} onto that observed denominator.}
  \label{fig:horizon-family}
\end{figure}

The comparison evaluates complete systems under their corresponding training protocol. Multi-step training uses 1,920 rows and 240 optimizer updates per seed. Single-step training uses 2,880 rows, 360 updates, and an additional \artifact{sender\_after\_obtain} stage. This design gives the single-step system explicit intermediate-state supervision; the measured difference therefore combines commitment horizon with each protocol's online control pattern.

The deterministic correct-binding rule supplies an exact binding-selection ceiling on the same held-out pool: 100.0\% versus 98.1\% for multi-step GenCoord, a paired GenCoord-minus-rule difference of $-1.9$ pp (95\% CI $[-4.2,-0.2]$). GenCoord preserves the correct binding in every executable output; its nine residual losses arise before plan formation. The comparison therefore separates exact binding sufficiency from learned commitment formation while retaining the same executable interface.

\section{Backend and Composition Analyses}
\label{sec:supp-backend}

\takeaway{For the enumerable task library, a learned binding code plus deterministic rule is a faster quality-matched backend. Direct Short DSL retains an explicit executable path as its model output.}

Table~\ref{tab:backend} isolates backend formation while holding the peer-facing commitment fixed. Table~\ref{tab:factor-minimality} then separates complete-case lookup from field-wise construction on unseen binding cross-products.

\begin{table}[!htbp]
  \centering
  \caption{Quality-matched direct and factor-code backends. Fresh success uses 128 clusters (two roles, three seeds); token counts and TTC use a separate 40-cluster same-card planning pool.}
  \label{tab:backend}
  \tabstyle
  \begin{tabularx}{\columnwidth}{Z R{1.55cm} R{1.55cm}}
    \toprule
    \thead{Metric} & \thead{Factor + rule} & \thead{Direct DSL} \\
    \midrule
    Fresh success & 768/768 & 768/768 \\
    Input tokens & 980.2 & 948.9 \\
    Output tokens & 19.4 & 79.1 \\
    Protocol-specific peer payload & 76.6 B & 76.6 B \\
    TTC p50 & 0.824 s & 2.453 s \\
    TTC p95 & 0.997 s & 2.723 s \\
    \bottomrule
  \end{tabularx}
\end{table}

The factor target is the sender binding ID before delivery and the receiver accepted-binding ID after delivery. Both models receive the same model-visible information. A deterministic composer recovers actor, path, object, destination, dependencies, and the same peer-facing Short DSL request from the selected code.

\begin{table}[!htbp]
  \centering
  \caption{Composition reference on 60 unseen binding cross-products.}
  \label{tab:factor-minimality}
  \tabstyle
  \begin{tabularx}{\textwidth}{Z R{2.30cm}}
    \toprule
    \thead{Method} & \thead{Terminal success} \\
    \midrule
    Complete-case table & 0/60 \\
    Explicit-factor composer & 60/60 \\
    Direct Short DSL & 60/60 \\
    \bottomrule
  \end{tabularx}
\end{table}

The composition reference separates exact case lookup from field-wise construction. Explicit factor composition and Direct Short DSL recover all 60 tested cross-products, while exact complete-case lookup covers the observed cases only. Table~\ref{tab:backend} further shows that, when a public codebook uniquely determines the branch, factor-code generation provides a lower-latency path to the same peer-facing commitment.

\subsection{Naturalized private facts without binding IDs}

The active binding field is next replaced by naturalized private-fact descriptions while the same 16 semantic task cells, public prior, and grounded executor are retained. Table~\ref{tab:naturalized-private} reports one Minecraft closed-loop condition with test-only syntactic recombinations and two stronger planning-only surface shifts.

\begin{table}[!htbp]
  \centering
  \caption{Naturalized private-fact formation after removing the active binding field. C1 is Minecraft closed-loop evaluation on held-out syntactic recombinations; C2 and C3 are planning-only lexical-paraphrase and paraphrase-plus-distractor conditions. Learned rows aggregate three seeds (960 episodes per condition); seedless model-free rows use 320 episodes.}
  \label{tab:naturalized-private}
  \tabstyle
  \begin{tabularx}{\textwidth}{Z R{2.15cm} R{2.15cm} R{2.15cm}}
    \toprule
    \thead{Formation method} & \thead{C1 live} & \thead{C2 planning} & \thead{C3 planning} \\
    \midrule
    Direct Short DSL (0.8B) & 100.0\% & 65.8\% & 64.8\% \\
    Learned factor + rule (0.8B) & 100.0\% & \textbf{79.1\%} & \textbf{80.1\%} \\
    Frozen factor parser rule & 100.0\% & 0.0\% & 0.0\% \\
    TF--IDF nearest-neighbour rule & 100.0\% & 59.4\% & 62.5\% \\
    Complete surface table & 0.0\% & 0.0\% & 0.0\% \\
    Oracle factors + rule & 100.0\% & 100.0\% & 100.0\% \\
    \bottomrule
  \end{tabularx}
\end{table}

Under C1, both learned backends and the two stronger non-LLM semantic mappings retain 100\% closed-loop success, while exact-string lookup covers the training surfaces only. The commitment interface therefore remains executable after the active answer field is replaced by test-only naturalized descriptions. Under stronger surface shifts, learned factor formation reaches 79.1\% and 80.1\%, exceeding TF--IDF by $+19.7$ pp (95\% CI $[11.8,27.9]$) and $+17.6$ pp ($[9.2,26.5]$), respectively; Direct Short DSL reaches 65.8\% and 64.8\%. The result strengthens the interface/backend separation: compact factor prediction is the more robust realization under stronger lexical change.

\section{Task-Structure Visibility}
\label{sec:supp-visibility}

\takeaway{Full and Skeleton views preserve the familiar post-handoff branch; the analysis cleanly separates that recovered motif from transformation-order motifs that require additional structural generalization.}

Figure~\ref{fig:visibility} reports the full 3-method $\times$ 3-view $\times$ 3-motif matrix with exact percentages in every cell.

\begin{figure}[!htbp]
  \centering
  \includegraphics[width=\textwidth]{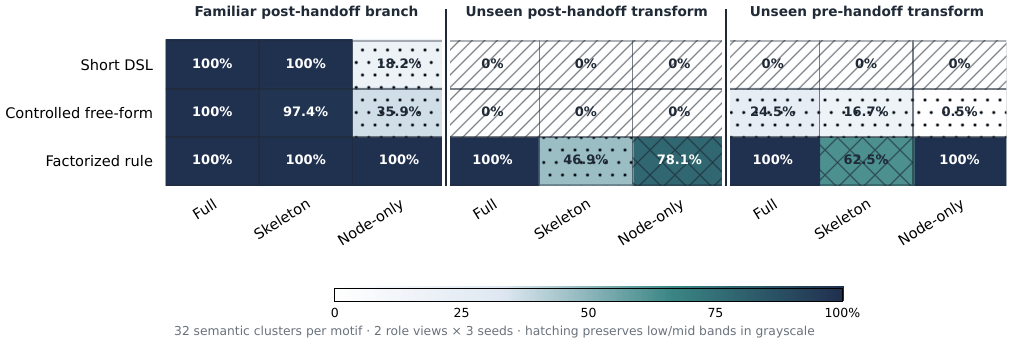}
  \Description{A three-by-nine annotated heatmap reports exact success percentages for three methods, three public-structure projections, and three topology motifs; cell outlines and labels remain distinguishable in grayscale.}
  \caption{Annotated task-structure robustness matrix. Columns pair each topology motif with Full, Skeleton, and Node-only projections; cells show exact terminal success.}
  \label{fig:visibility}
\end{figure}

Full DAG exposes the public nodes, typed fields, coarse stages, predecessor lists, and edges; its symbolic reference follows a topological ordering. Skeleton removes predecessor lists and edges while retaining coarse stage, and its symbolic reference uses that stage information. Node-only removes stages and edges, applies a stable answer-independent node arrangement, and uses a fixed task-path priority. The three views are categorical projections with distinct answer-independent ordering policies. Across 96 clusters, two role views, and three frozen checkpoints, the matrix shows that Direct Short DSL preserves the familiar post-handoff branch under Full and Skeleton views, while the two transformation-order motifs remain at 0\%. The symbolic rule's non-monotone Node-only recovery reflects its fixed task-path priority under a different projection.

\section{E1/E2: Feedback Causality and Centralized Reference}
\label{sec:supp-private}

\takeaway{Across 40 Goal--Capability semantic clusters (20 matched counterfactual pairs) and three independently trained seeds, requester-local planning reaches 50\%, correct feedback reaches 100\%, counterfactual feedback reaches 0\%, and centralized full information reaches 100\%.}

\subsection{Paired construction and matched training}

The evaluation crosses two private goals, two peer-local workcell modes, and ten world variants, producing $2\times2\times10=40$ semantic clusters. Table~\ref{tab:e1e2-training} records the matched training contracts; Table~\ref{tab:private-info} reports the four-condition outcome profile; Table~\ref{tab:e1-field-fidelity} isolates the fields controlled by feedback; and Figure~\ref{fig:private-traces} traces the capability-conditioned routes and intervention. Holding goal and world variant fixed while changing the workcell mode creates 20 matched counterfactual pairs. Within each pair, the requester's model-visible input and initial proposal are byte-identical; the peer-local mode changes the feasible craft actor, handoff object, and downstream suffix. Each condition evaluates all 40 clusters under two role permutations and three training seeds, yielding 240 episodes per condition and 960 episodes overall. The primary analysis averages the six role--seed observations within semantic cluster and resamples 40 clusters. A pair-block sensitivity analysis additionally averages both capability worlds within each goal--world pair and resamples the 20 matched pair blocks.

\begin{table}[!htbp]
  \centering
  \caption{Training contracts for the Goal--Capability models. Train-token occurrences sum tokenized prompt, target, and EOS across all epochs.}
  \label{tab:e1e2-training}
  \tabstyle
  \begin{tabularx}{\textwidth}{@{}Z Z r r r r r@{}}
  \toprule
  \thead{Model} & \thead{Visible information} & \thead{Rows} & \thead{Epochs} & \thead{Updates} & \thead{Train-token occurrences} & \thead{Seeds} \\
  \midrule
  Distributed requester & 320 initial local + 320 after-response local & 640 & 2 & 80 & 868,720 & 3 \\
  Centralized full information & merged current local views & 320 & 4 & 80 & 970,080 & 3 \\
  \bottomrule
\end{tabularx}

\end{table}

The distributed data contain 320 initial local records and 320 local records after the bounded response. The centralized data contain 320 joint-plan records. Both models use Qwen3.5-0.8B, effective batch size 16, learning rate $10^{-5}$, maximum length 2,048, and final-step checkpoints, with DEV reserved for implementation checks. Matching optimizer updates gives the centralized model 1.117 times the distributed train-token occurrences. The reported totals include prompt, target, and EOS tokens over all sample occurrences; every row remains below the 2,048-token truncation limit.

\subsection{E1: feedback-content intervention}

\begin{table}[!htbp]
  \centering
  \caption{E1/E2 outcomes on 40 semantic clusters, arranged as 20 matched counterfactual pairs. Intervals are primary cluster-bootstrap 95\% CIs. Strict success requires all six role--seed observations of a cluster to succeed; the final column names the measured boundary component rather than total architecture traffic.}
  \label{tab:private-info}
  \tabstyle
  \begin{tabularx}{\textwidth}{@{}Z r r r c r r Z@{}}
  \toprule
  \thead{Condition} & \thead{Seeds} & \thead{Epis.} & \thead{Success} & \thead{95\% CI} & \thead{Strict} & \thead{Calls} & \thead{Measured boundary component} \\
  \midrule
  Requester-local (A-only) & 3 & 240 & 50.0\% & $[35,65]$ & 20/40 & 1.0 & 0 B cross-boundary payload \\
  Correct bounded feedback & 3 & 240 & 100.0\% & $[100,100]$ & 40/40 & 2.0 & response: mean 103 B \\
  Counterfactual feedback & 3 & 240 & 0.0\% & $[0,0]$ & 0/40 & 2.0 & injected response: mean 103 B \\
  Centralized full information & 3 & 240 & 100.0\% & $[100,100]$ & 40/40 & 1.0 & state aggregation: mean 956 B \\
  \bottomrule
\end{tabularx}

\end{table}

Correct feedback improves success over requester-local planning by $+50.0$ percentage points (primary 40-cluster 95\% CI $[35,65]$). Replacing only the response with the paired counterfactual workcell mode changes the same two-call protocol from 100\% to 0\% (paired difference $+100.0$ points, 95\% CI $[100,100]$). The 20-pair block sensitivity gives $+50.0$ points $[50,50]$ and $+100.0$ points $[100,100]$, respectively. Every seed reproduces the 50/100/0 profile with descriptive seed-level SD 0.0. All 720 distributed episodes are parse-valid, condition-specific contract-valid, planning-complete, and fallback-free. Counterfactual commitments remain internally valid under the injected response, while the unchanged executable world provides an independent terminal check.

\begin{table}[!htbp]
  \centering
  \caption{Commitment fields under the counterfactual-feedback intervention.}
  \label{tab:e1-field-fidelity}
  \tabstyle
  \begin{tabularx}{\textwidth}{@{}Z r r Z@{}}
  \toprule
  \thead{Field} & \thead{Matches injected} & \thead{Matches true world} & \thead{Observed behavior} \\
  \midrule
  Craft actor & 240/240 & 0/240 & rewritten \\
  Handoff item & 240/240 & 0/240 & rewritten \\
  Handoff count & 240/240 & 0/240 & rewritten \\
  Peer suffix & 240/240 & 0/240 & rewritten \\
  Goal item & 240/240 & 240/240 & invariant \\
  Goal count & 240/240 & 240/240 & invariant \\
  \bottomrule
\end{tabularx}

\end{table}

The intervention preserves the requester input, initial proposal, executable world, model checkpoints, decoding contract, and two-call schedule. Counterfactual feedback controls the final commitment in all 240 episodes: craft actor, handoff item, handoff count, and peer suffix follow the injected capability consequence, while goal item and goal count remain unchanged. Each complete commitment therefore reaches the actor selected by the injected response, and the unchanged world rejects that counterfactual actor at terminal verification.

\begin{figure}[!htbp]
  \centering
  \includegraphics[width=\textwidth]{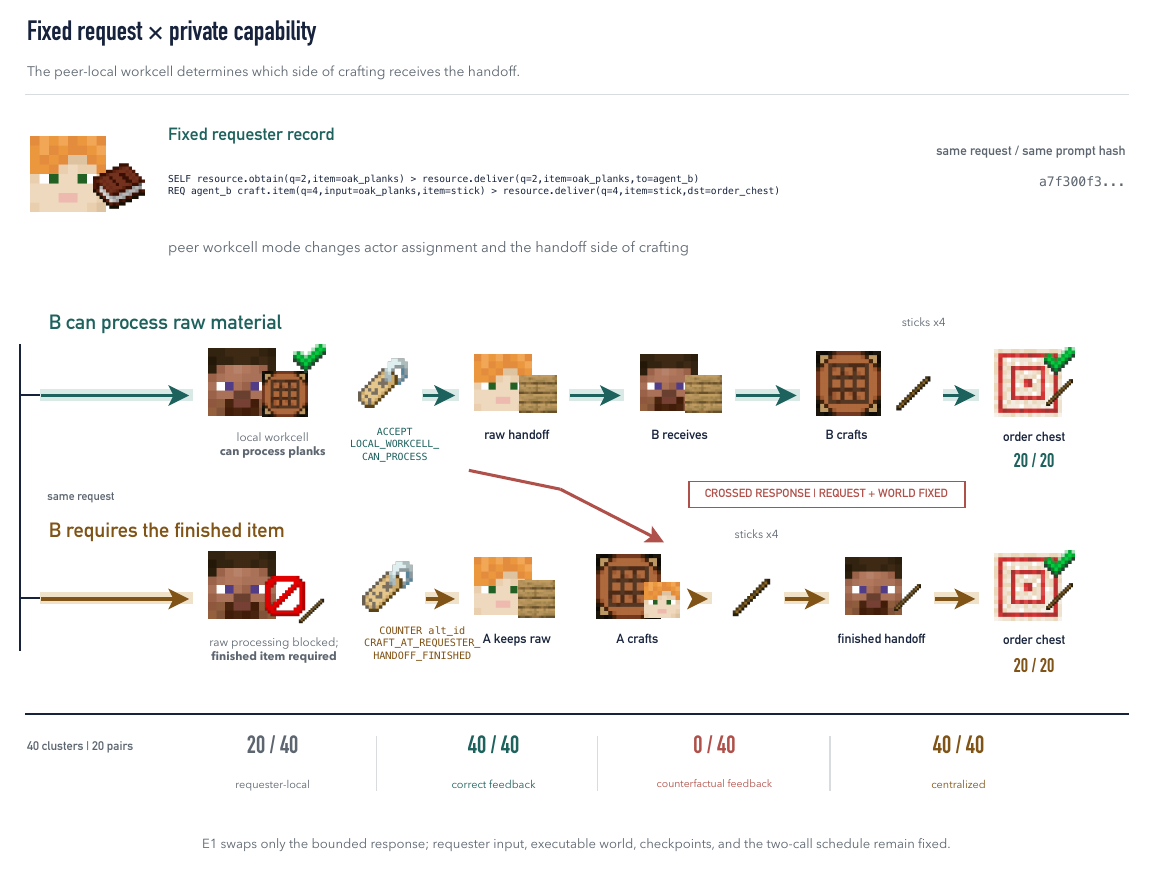}
  \Description{A requester record and its prompt hash remain fixed above a capability fork; the peer fact routes execution through accepted raw-material or countered finished-item handoff paths, while a counterfactual-feedback intervention keeps the requester and executable world fixed.}
  \caption{Capability-conditioned commitments and the counterfactual-feedback intervention with requester input and executable world held fixed.}
  \label{fig:private-traces}
\end{figure}

Figure~\ref{fig:private-traces} illustrates a correct-feedback pair that shares one requester prompt hash. A \apath{RAW_PROCESSOR} peer accepts the raw-material handoff. A \apath{FINISHED_RECEIVER} peer counters with \apath{CRAFT_AT_REQUESTER_HANDOFF_FINISHED}; the requester moves crafting into its own path and hands off the finished item. Both correct routes reach the terminal checker. The E1 intervention injects the matched counterfactual response while keeping the requester input, initial proposal, and executable world fixed.

\subsection{E2: three-seed centralized reference}

Centralized full information succeeds in 240/240 episodes across the same three seeds and 40 clusters, matching correct feedback with a paired difference of 0.0 points (95\% CI $[0,0]$). It assembles the two current local views before one learned call. The distributed route keeps the views separate and uses a bounded response with mean 103 B and p95 118 B before its second learned call. The centralized state-aggregation component has mean 955.5 B and p95 963 B (rounded to 956 B in Table~\ref{tab:private-info}). The 103 B value measures the bounded response, while 955.5 B measures centralized state aggregation. The architecture-level equations below account for every application message and make the component boundaries explicit.

\Needspace{12\baselineskip}
\begin{samepage}
For architecture accounting, let $\mathcal A_t\subseteq\mathcal A$ be the participating agents at round $t$, let $E_t\subseteq\mathcal A\times\mathcal A$ be the directed active-edge set, and let $\mathcal M_{ij}^t$ be the complete sequence of application messages sent on directed edge $(i,j)$. The function $b(m)$ counts canonical UTF-8 payload bytes, including any application wrapper present in $m$ but excluding transport framing. Let $m_{i,\uparrow}^{t}$ and $m_{i,\downarrow}^{t}$ denote centralized upload and download payloads. With $R_c$ centralized rounds and $R_l$ local rounds,
\begin{align}
B_{\mathrm{central}}
  &= \sum_{t=1}^{R_c}\sum_{i\in\mathcal A_t}
     \left[b(m_{i,\uparrow}^{t})+b(m_{i,\downarrow}^{t})\right], \\
B_{\mathrm{local}}
  &= \sum_{t=1}^{R_l}\sum_{(i,j)\in E_t}
     \sum_{m\in\mathcal M_{ij}^{t}} b(m).
\end{align}
Feedback sent in the reverse direction belongs to the corresponding edge $(j,i)$. These equations define total interface accounting and are distinct from the component measurements above. Both architectures depend on coordination rounds; local communication also depends on active-edge density and messages per active edge.
\end{samepage}

\section{Reproducibility Notes}
\label{sec:supp-artifact}

Pre-release validation checks the bundled compact data, method identities, pool denominators, non-null horizon means, paired-difference direction, confidence intervals, bootstrap seeds, and 10,000 resamples. It also checks the corrected single-step inventory of 2,880 rows, three stage counts of 960, and 360 optimizer updates per seed. For E1/E2, the arXiv ancillary data contain twelve evaluation lanes and 960 episode metrics stripped of host, scheduler, process, and private-path identifiers, including all 240 counterfactual-feedback control episodes. The deterministic table builder regenerates Tables~\ref{tab:e1e2-training}--\ref{tab:e1-field-fidelity} from the canonical analysis and matched training-budget files. Earlier representation, request, horizon, and visibility results are reconstructed from the bundled aggregate analyses and fixed paired-comparison records; E1/E2 additionally include de-identified episode-level metrics.

\paragraph{Statistical procedure.}
Representation, request, horizon, visibility, and the primary E1/E2 analyses use the semantic cluster as the independent unit. Where role views or training seeds repeat a cluster, their outcomes are averaged within cluster before inference. Paired effects use cluster-aligned differences. Reported bootstrap intervals use 10,000 percentile resamples with fixed analysis seeds; E1/E2 condition intervals use seeds 8901--8904 and primary paired-difference intervals use 8911--8914. The E1/E2 sensitivity aggregates both capability clusters within each goal--world pair and resamples 20 pair blocks with seeds 8921--8924. Exact-binomial intervals are two-sided 95\% Clopper--Pearson intervals. Seed-level standard deviations summarize training-run stability, while cluster bootstrap intervals provide inferential uncertainty.

Table~\ref{tab:claim-artifact-map} links every reader-facing result family to its compact canonical source.

\begin{table}[!htbp]
  \centering
  \caption{Claim-to-artifact map for the compact arXiv ancillary bundle.}
  \label{tab:claim-artifact-map}
  \tabstyle
  \begin{tabularx}{\textwidth}{L{3.0cm} Z}
    \toprule
    \thead{Result family} & \thead{Canonical bundle artifact} \\
    \midrule
    Evidence-suite and task inventory & \apath{anc/source_data/evidence_suites.csv}; \apath{anc/source_data/task_templates.csv} \\
    Representation quality and cost & \apath{anc/source_data/representation_cost.csv} \\
    Request intervention & \apath{anc/source_data/request_intervention.csv} \\
    Commitment horizon & \apath{anc/source_data/horizon_summary.json}; \apath{anc/source_data/horizon_family.csv}; \apath{anc/source_data/paired_comparisons.json} \\
    Structure visibility & \apath{anc/source_data/visibility_motifs.csv}; \apath{anc/source_data/visibility_overall.csv} \\
    Grounded runtime trace & \apath{anc/source_data/runtime_trace.json}; \apath{anc/source_data/trace_registry.csv} \\
    E1/E2 outcomes and field fidelity & \apath{anc/source_data/e1e2_episode_metrics.jsonl}; \apath{anc/source_data/e1e2_analysis.json}; \apath{anc/source_data/e1e2_field_fidelity.csv}; \apath{anc/source_data/e1e2_config.json} \\
    Training budgets and runs & \apath{anc/source_data/e1e2_training_budget.json}; \apath{anc/source_data/training_runs.csv} \\
    Backend and composition references & \apath{anc/source_data/backend_comparison.csv}; \apath{anc/source_data/composition_reference.csv} \\
    Naturalized private-fact formation & \apath{anc/source_data/naturalized_private_fact_summary.csv}; \apath{anc/source_data/naturalized_private_fact_paired.csv}; \apath{anc/source_data/naturalized_private_fact_scope.json} \\
    Table regeneration & \apath{anc/scripts/build_e1e2_tables.py} \\
    \bottomrule
  \end{tabularx}
\end{table}

\noindent\begin{minipage}{\textwidth}
\paragraph{Packaged sources.}
The arXiv package contains the bibliography, canonical Supplement source, compact ancillary source data, final vector figure exports, and deterministic E1/E2 table builder. Reader-facing paths are package-relative. Experimental runtime dependencies include Qwen3.5-0.8B, Mineflayer~4.37.1, Node.js, Python, PyTorch, Transformers, and the official Minecraft Java server~1.21.4; each is governed by its upstream license or terms.
\end{minipage}

\paragraph{Build sequence.}
Regenerate the E1/E2 tables with:
\begin{lstlisting}[style=gcpdsl]
python3 anc/scripts/build_e1e2_tables.py \
  --analysis anc/source_data/e1e2_analysis.json \
  --training anc/source_data/e1e2_training_budget.json \
  --tables-dir tables --data-dir anc/source_data
\end{lstlisting}
Then compile \apath{supplement.tex} with the bundled bibliography. Final vector figure PDFs are included under \apath{figures/}; editable authoring sources, render caches, and contact sheets are intentionally excluded from the arXiv upload package.

\bibliographystyle{ACM-Reference-Format}
\bibliography{references}